\newtheorem{proposition}{Proposition}[section]
\newtheorem{definition}{Definition}[section]
\newtheorem{properties}{Properties}[section]
\definecolor{DeepBlue}{RGB}{0,0,139} % 深蓝色
\definecolor{Blue}{RGB}{46, 134, 193} % 标准蓝色
\definecolor{LightBlue}{RGB}{133, 193, 233} % 浅蓝色
\definecolor{PaleBlue}{RGB}{176,224,230} % 最浅蓝色
\title{A Methodological Framework for Measuring Spatial Labeling Similarity}
\author{
Yihang Du$^{1*}$\and
Jiaying Hu$^{2*}$\and
Suyang Hou$^3$\and
Yueyang Ding$^4$\and
Xiaobo Sun$^{1\dagger}$\\
\affiliations
$^1$School of Statistics and Mathematics, Zhongnan University of Economics and Law\\
$^2$Department of Biomedical Engineering, Southern University of Science and Technology\\
$^3$School of Information Engineering, Zhongnan University of Economics and Law\\
$^4$School of Life Science, Hangzhou Institute for Advanced Study, University of Chinese Academy of science\\
\emails
\{duyh,suyang\}@stu.zuel.edu.cn,
jy.hu1@siat.ac.cn,
dingyueyang24@mails.ucas.ac.cn,
xsun28@gmail.com
}
\begin{document}

\maketitle

\begin{abstract}
Spatial labeling assigns labels to specific spatial locations to characterize their spatial properties and relationships, with broad applications in scientific research and practice. Measuring the similarity between two spatial labelings is essential for understanding their differences and the contributing factors, such as changes in location properties or labeling methods. An adequate and unbiased measurement of spatial labeling similarity should consider the number of matched labels (label agreement), the topology of spatial label distribution, and the heterogeneous impacts of mismatched labels. However, existing methods often fail to account for all these aspects. To address this gap, we propose a methodological framework to guide the development of methods that meet these requirements. 
Given two spatial labelings, the framework transforms them into graphs based on location organization, labels, and attributes (e.g., location significance). The distributions of their graph attributes are then extracted, enabling an efficient computation of distributional discrepancy to reflect the dissimilarity level between the two labelings. We further provide a concrete implementation of this framework, termed Spatial Labeling Analogy Metric (SLAM), along with an analysis of its theoretical foundation, for evaluating spatial labeling results in spatial transcriptomics (ST) \textit{as per} their similarity with ground truth labeling. Through a series of carefully designed experimental cases involving both simulated and real ST data, we demonstrate that SLAM provides a comprehensive and accurate reflection of labeling quality compared to other well-established evaluation metrics. Our code is available at \href{https://github.com/YihDu/SLAM}{here}.
\end{abstract}

\renewcommand{\thefootnote}{\fnsymbol{footnote}}

\footnotetext[1]{Equal contribution.}
\footnotetext[2]{Corresponding author.}

\vspace{-1.6em}

\section{Introduction}
% ''Trump or Harris, that is the question." The answer to this question may be explored through labeling each state's preference for Democrats (blue) or Republicans (red) on the current election poll map and compare its similarity to those from 2016 and 2020  (\Cref{fig:illustration}a).
% If the current map more closely resembles the 2020 map, Harris is likely favored; otherwise, Trump may prevail. This scenario exemplifies the task of spatial labeling and its similarity measurement. 
One of the most puzzling questions in the U.S. in 2024 is who, Trump or Harris, will step into the president office. The answer to this question may be explored through labeling each state's preference for Democrats (blue) or Republicans (red) on the election poll map before the Election Day and compare its similarity to those from 2016  and 2020  (\Cref{fig:illustration}a). If it more closely resembles the 2020 map, Harris is likely favored; otherwise, Trump may prevail. This scenario exemplifies the task of spatial labeling and its similarity measurement. 
% "Trump or Harris, that was the question." With the election now concluded and Trump declared the winner, examining spatial labeling and similarity measurement remains as relevant as ever. For instance, one could revisit the task of analyzing each state's preference for Democrats (blue) or Republicans (red) on the final election map and compare it to the distribution from 2016 and 2020 (\Cref{fig:illustration}a). Such a comparison provides insights into shifting political landscapes and voter behavior over time, illustrating the critical role of spatial labeling in understanding dynamic spatial patterns.
More formally, spatial labeling involves assigning labels to specific spatial locations (a.k.a. \textit{ spatial spots}) through manual assignment, spatial classification, or clustering algorithms. For instance, in pathological analysis, spatial clustering methods assign group labels to fixed spatial spots across a tissue section\cite{ST0}, dissecting it into biologically distinct domains (\Cref{fig:illustration}b) based on spot-wise gene expression profiles detected by spatial transcriptomics (ST) technologies\cite{ST}. Similarly, in epidemiological analysis of influenza at specified geographical locations \cite{wang2019irregular}, the primary influenza type at each location serves as its spatial label.

\begin{figure*}[t]
    \centering
    \captionsetup{belowskip=-10pt}
    \includegraphics[width=0.9\textwidth]{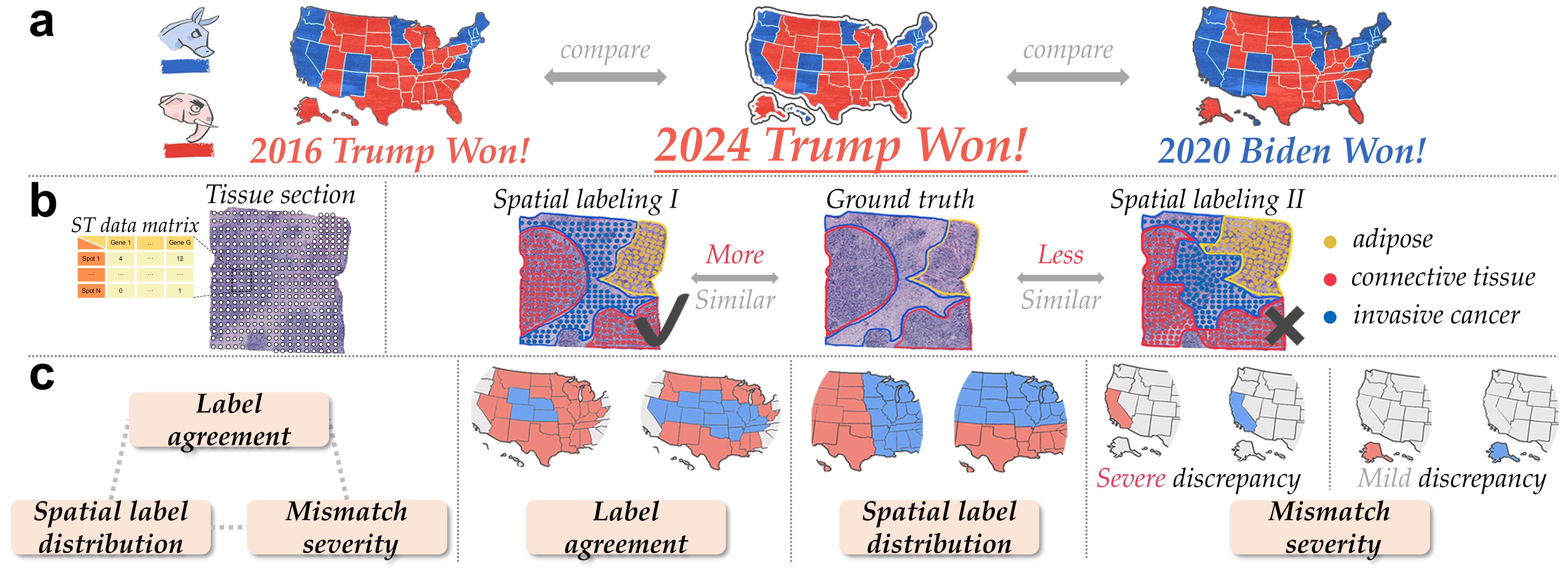}
    \caption{Spatial labeling and its similarity measurement. a, Label each state's favoring for the two parties on the poll map and compare the current map to historical ones. b, ST measures gene expression profiles at spots across a tissue section. Spatial labeling method assigns labels to spots to represent underlying biological domains. c, Three aspects in measuring spatial labeling similarity.}
    \label{fig:illustration}
\end{figure*}

In scientific research, it is common to compare two spatial labelings of identical locations by measuring their similarity. 
% Measuring the similarity between spatial labeling results has broad applications among scientific community. 
For example, evaluating the effectiveness of ST classification and clustering methods requires comparing spatially labeled tissue section with the ground truth labeling\cite{ST1} (\Cref{fig:illustration}b). 
% helps identify the most effective clustering algorithm, 
Measuring similarity between spatial labels of major influenza types over time can estimate the virus' evolutionary trend and rate. Therefore, adequate measurement of spatial labeling similarity facilitates the knowledge discovery and data mining from spatial data, which should consider three aspects: label agreement, spatial label distribution, and the severity of mismatched labels. Label agreement refers to the total number of matched labels. Spatial label distribution concerns the consistency in topological structures between two spatial labelings, implying that similarity can vary with spatial organization of matches and mismatches. Mismatch severity highlights the heterogeneity in mismatches' impact on overall similarity. 
% The severity of mismatched labels highlights that different mismatched labels can have varying impacts on the overall similarity.
Using faked election poll maps as an illustration (\Cref{fig:illustration}c), label agreement indicates the number of states favoring the same party in both maps. The impact of spatial label distribution can be seen in the dissimilarity between a horizontally split map (northern states for Democrats versus southern states for Republicans) and a vertically split map (eastern states for Democrats versus western states for Republicans). Mismatch severity means that differing in Alaska's color (3 electoral votes) is less significant than differing in California's color (55 electoral votes).

% 现有的benchmark分类叙述：
Existing methods for evaluating spatial labeling similarity can be broadly divided into two categories. 
% 第一类：supervised
% On the one hand, Accuracy; Precision, Recall, -->F1 Score
The first focuses on label agreement and includes two types of methods designed for labels generated in classification and clustering tasks, referred to as \textit{supervised} and \textit{unsupervised}, respectively. Mathematically, assume that we have two spatial labeling results over $n$ spots: $R_1=\{r:r\in L_1=\{a_1,a_2,..a_k\}\}^n$ and $R_2=\{r:r\in L_2=\{b_1,b_2,..b_{k'}\}\}^n$, where $L_1$ and $L_2$ represent their respective label space. Supervised methods, applicable when the two label spaces are identical (i.e., $L_1\equiv L_2$), include accuracy, precision, recall, and F1 score. 
% On the one hand, common supervised metrics like Accuracy\hl{[add ref]}, Precision\hl{[add ref]}, Recall\hl{[add ref]} and F1 Score can general evaluate spatial labeling similarity. 
% 第一类：unsupervised
% On the other hand, ARI, NMI, Jaccard Score, V-measure, Fowlkes-Mallows Index
Unsupervised methods, used when $L_1\not\equiv L_2$ (e.g., cluster labels assigned by clustering method versus ground truth labels), include the Adjusted Rand Index (ARI) \cite{ARI}, Normalized Mutual Information (NMI) \cite{NMI}, Jaccard Score \cite{JaccardScore}, V-measure \cite{v-measure}, and Fowlkes-Mallows Index (FMI) \cite{FMI}. They typically work by counting agreements and disagreements between pairwise spot labels. However, a significant limitation of both supervised and unsupervised methods is that they treat spatial spots independently, overlooking their spatial relationships. Consequently, they can incorrectly regard two spatial labeling results with distinct topological organizations as having high similarity. The second category comprises statistical tests like the Weisfeiler-Lehman (WL) test \cite{weisfeilerlehmantest} and the permutation test \cite{thepermutationtest}, which are purposed for determining statistical significance of similarity without quantifying similarity level, rendering them unsuitable for this study. 

Additionally, both categories treat mismatched labels equally, which, however, can differently impact the labeling similarity in practice. However, this limitation cannot be trivially addressed by assigning weights to spots because the spatial position of mismatched labels also matters. For example, in the electoral map example, Georgia and Michigan have identical weights (16 votes). However, Georgia’s shift from red to blue holds more significance due to its geographical connection to other red states, signaling potential political disunity.  These limitations collectively restrict the use of these methods to obtain a comprehensive, sensible, and unbiased result.

To address these limitations, we propose a methodological framework for measuring spatial labeling similarity, accounting for label agreement, spatial label distribution, and mismatching severity. The framework's workflow comprises four steps (\Cref{fig:fig2}): Initially, if the two spatial labelings have different label spaces, they are matched to a common one using a matching function (\textbf{Step I}). Next, we construct a basic graph, where each spot is represented by a node positioned according to its location, with spot attributes (e.g., gene expression level) serving as node attribute. Edge types and weights are set based on the labels and attributes of the connecting nodes using a graph edit function, thereby incorporating the label matching degree and mismatching severity (\textbf{Step II}). In\textbf{ Step III}, the distribution of graph attributes is estimated using an attribute extracting function. The dissimilarity of two spatial labelings is then represented by the expected discrepancy between their graph attribute distributions using a discrepancy function(\textbf{Step IV}). To demonstrate the efficacy of this framework, we implement it as Spatial Labeling Analogy Metric (\textbf{SLAM}), a novel metric for evaluating the quality of spatial labeling results and test it using both simulated and real ST data. Our main contributions include:

\begin{itemize}[topsep = 2pt, parsep = 0pt , left=0pt]
    \item \textbf{Methodological Framework and Implementation}. We propose a methodological framework for developing comprehensive and unbiased methods for measuring spatial labeling similarity, addressing label agreement, spatial label distribution, and mismatching severity which existing methods fail to achieve simultaneously. We further implement the framework as SLAM, an innovative spatial labeling evaluation metric, along with an analysis for its theoretical foundation (Appendix A.2 and A.3).
    % \item It serves as a guideline for designing unbiased and well-rounded methods by implementing its modules as specific functions \textit{as per} context.
    % \item \textbf{Framework Implementation}. We implement the framework as SLAM, an innovative spatial labeling evaluation metric to facilitate well-rounded assessment of spatial labeling results and labeling method selection. We also provide an analysis for the theoretical foundation of this implementation in \Cref{swd} and \Cref {RKHS}.
    \item \textbf{Experimental Validation}. Through deliberately designed experimental scenarios, we demonstrate that the evaluation outcomes using SLAM provide the most accurate reflection of the spatial labeling quality in ST, compared to existing metrics.
\end{itemize}

\section{Related Work}
\subsection{Label Matching-Based Methods}\label{related work label matching-based}
Labeling matching-based methods focus on the number of matched and mismatched labels between two spatial labelings, and can be divided into two categories which we refer to as \textit{supervised} or \textit{unsupervised}. Supervised methods require that two spatial labelings share the same label space, as explained in the Introduction section. Prevalent supervised methods include accuracy, precision, recall, and F1 score. In these methods, one spatial labeling is treated as the reference, and mismatched labels in the other labeling are considered errors to calculate the corresponding metrics. Unsupervised methods, on the other hand, do not require matched label space, instead emphasizing pairwise label agreement. That is, two locations agree if they either share the same label or not in both spatial labeling results. 
% The distinction between these categories lies in the nature of the task. In \textit{supervised} methods, the focus is on tasks like classification, where predicted labels are compared to ground truth labels, and specific values are assigned to each label. % fall into this category, as they evaluate how well the predicted labels match the true, predefined categories.Conversely, \textit{unsupervised} methods are typically used in clustering tasks, where result labels are not ground truth labels per se.
Common unsupervised methods include ARI, NMI, Jaccard Score, V-measure, and FMI. The ARI is an improved version of Rand index (RI), which calculates the ratio of location pairs that consistently share label or not to all possible location pairs in two spatial labelings. NMI normalizes the mutual information between two spatial labelings using their average entropies, accounting for different labeling dispersion degree and numbers, with a larger value indicating greater similarity. V-measure is equivalent to NMI, using arithmetic mean as the aggregation function. The Jaccard Score is computed as the ratio of the intersection to the union of the two sets of label pairs created from two spatial labelings. FMI takes one spatial labeling as reference and counts true positives (TPs), false positives (FPs), and false negatives (FNs) based on label agreement between the two spatial labelings. It is calculated as the geometric mean of precision and recall from TPs, FPs, and FNs. While these metrics highlight different aspects of the agreement between the two spatial labelings, they all assume that locations are equal and independent, overlooking spatial relationships and varying significance among locations, thus leading to biased evaluation results. The detailed calculation of these methods can be found in Appendix F.

\subsection{Statistical Test-Based Methods}
Statistical test-based methods (e.g., WL test) typically involve computing a statistic under the null hypothesis that the two spatial labelings are independent and dissimilar. If the observed statistic value significantly deviates from the expected value, we reject the null hypothesis and believe the two spatial labeling are significantly more similar than expected by chance. 
% For example, the permutation test computes a similarity metric (e.g., Jaccard coefficient) for two spatial labeling groups. Then, the p-value of the observed value is computed from the distribution of  the similarity metric, built using the permutation technique, to determine if the null hypothesis should be rejected. WL test iteratively relabels locations using a hash function on neighboring location labels until a canonical form is reached, based on which we can determine whether the two spatial labelings are isomorphic and similar. 
% However, WL test has limitations, as non-isomorphic labelings can share the same canonical form. 
The major drawback of these methods lies in their inability to quantify and compare similarity levels across labeling pairs, making them unsuitable for the objective of this study.

\section{Method}

\subsection{Methodological Framework}

Let $Y^{(1)} \in \{a_1 \cdots, a_{K_1}\}^n$ and $\widehat{Y}^{(2)} \in \{b_2, \cdots, b_{K_2}\}^n$ denote the label vectors of the two spatial labelings to be compared, where $K_1$ and $K_2$ are the total numbers of distinct labels in $Y^{(1)}$ and $\widehat{Y}^{(2)}$, respectively. If $Y^{(1)}$ and $\widehat{Y}^{(2)}$ are in distinct label space (e.g., $K_1\neq K_2$), $\widehat{Y}^{(2)}$ is mapped to the space of $Y^{(1)}$ using a matching function $\mathcal{M}:b_v\rightarrow a_u$: 
\begin{equation}\label{matching function}
    Y^{(2)} = \mathcal{M}(\widehat{Y}^{(2)},Y^{(1)}).
\end{equation}
Let $G^s(V, E^s)$ represent the basic graph in Step II, where $V$ represents the set of nodes and $E^s\in \{1,0\}^{|E^S|}$ the set of edges. Given a spatial labeling result, its specific graph is constructed using a graph attribute editing function $\mathcal{G}$, which incorporates the label distribution and, if available, spot attributes $X \in \mathbb{R}^{|V| \times d}$, where $d$ denotes the attribute dimension:
\begin{equation}
    G^{(*)}(V, E^{(*)},W^{(*)}) = \mathcal{G}(X,Y^{(*)},G^s),\ \text{where}\ * \in \{1,2\}.
\end{equation}

Here, $W^{(*)}\in \mathbb{R}^{|E^{(*)}|}$ denotes the set of edge weights computed \textit{as per} node labels (and attributes). The similarity between the two spatial labelings is estimated by comparing their graph attribute distributions, $ Z^{(1)}$ and $ Z^{(2)}$, obtained using an attribute extraction function $\mathcal{T}$:
\begin{equation}\label{attribute function}
    Z^{(*)} \in \mathbb{R}^P = \mathcal{T}(G^{(*)}),\ \text{where}\ * \in \{1,2\}.
\end{equation}

The distributional discrepancy between $ Z^{(1)}$ and $ Z^{(2)}$ is computed using a discrepancy function $\mathcal{D}:\mathbb{R}^P\rightarrow\mathbb{R}$:
\begin{equation}\label{discrepancy function}
    d = \mathcal{D}(Z^{(1)},Z^{(2)}).
\end{equation}
Here, $d$ is a discrepancy score, where a lower value indicates higher similarity between the two spatial labelings. Note that there are multiple choices for $\mathcal{M}$, $\mathcal{G}$, $\mathcal{T}$, and $\mathcal{D}$ \textit{as per} the specific context. For example, $\mathcal{M}$ can be the Hungarian algorithm \cite{Hungarian} or kernel-based probability density methods \cite{mars};  ;$\mathcal{T}$ can be functions for calculating graph coefficients \cite{cc} or Laplacian spectrum \cite{spectral}; while $\mathcal{D}$ can be functions for calculating the maximum-mean discrepancy \cite{MMD} or Wasserstein distance \cite{rabin2012wasserstein}. In the following sections, we will give an implementation of our framework for evaluating the spatial labeling results using ST data as an example.

% For example, $\mathcal{M}$ can be the Hungarian algorithm \cite{Hungarian} or kernel-based probability density methods \cite{mars}; $\mathcal{T}$ can be functions for calculating graph coefficients \cite{cc} or Laplacian spectrum \cite{spectral}; while $\mathcal{D}$ can be functions for calculating the maximum-mean discrepancy \cite{MMD} or Wasserstein distance \cite{rabin2012wasserstein}.

% Jaccard Index \cite{JaccardScore}. 

\begin{figure*}[t]
    \centering
    \captionsetup{belowskip=-10pt}
    \includegraphics[width=0.85\linewidth]{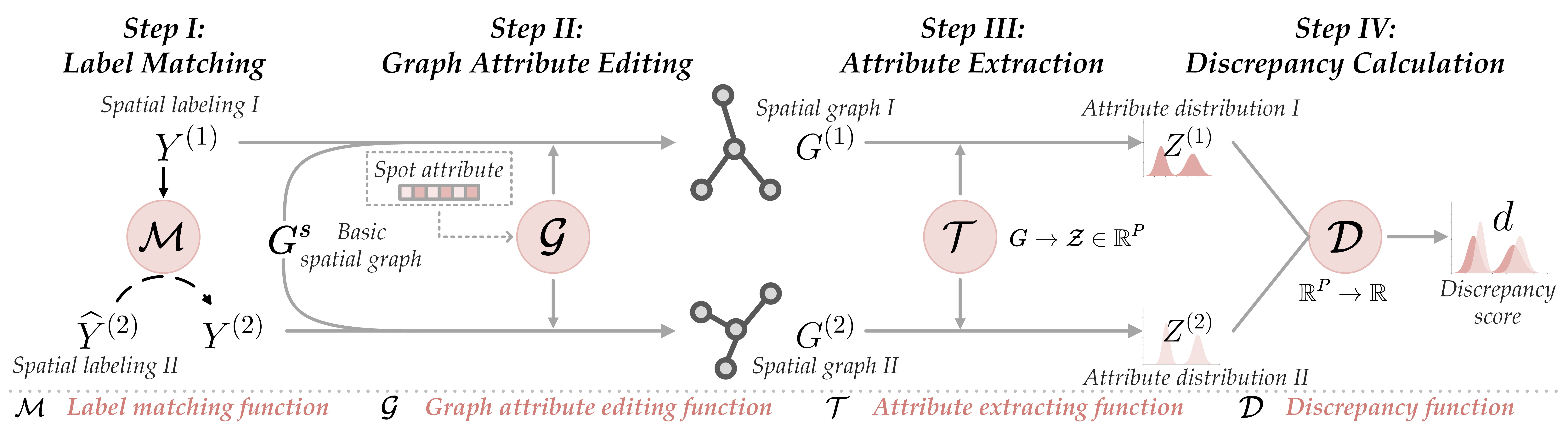}
    \caption{Overview of the methodological framework.}
    \label{fig:fig2}
\end{figure*}

\subsection{Application: Evaluating Spatial Labeling Result in Spatial Transcriptomics with SLAM}
A common application that involves measuring spatial labeling similarity is the evaluation of spatial labeling result against ground truth labeling. To this end, we implement our framework as SLAM and test it using ST data. Spatial labeling in ST involves assigning spot labels, representing biologically distinct tissue domains, based on their locations and gene expression profiles. Specifically, an ST dataset is represented as $X \in \mathbb{R}^{n \times g}$, where $n$ is the number of spots across the tissue section, $g$ is the number of genes, and $x_{i,j}$ denotes gene $j$'s expression at location $i$. We define the ground truth labels as $Y^{(0)} \in \{1,2, \cdots, K\}^n$ and a spatial labeling result as $\widehat{Y}^{(1)} \in \{a_1,a_2..a_{K_1}\}^{n}$, contains $K_1$ groups. 

\subsubsection{Label matching} 
We implement the $\mathcal{M}$ function in \Cref{matching function} to match $\widehat{Y}^{(1)}$ with $Y^{(0)}$: 

\begin{equation}
    Y^{(1)} = \mathcal{M}(\widehat{Y}^{(1)},Y^{(0)}),
\end{equation}

where $Y^{(1)}\in \{1,2, \cdots, K\}^n$ is the post-matching labels. $\mathcal{M}$ denotes our label matching function. Here, we propose a Jaccard coefficient-based matching algorithm as $\mathcal{M}$, which can work even if $K_1 \neq K$ (see Appendix A.1 for details). 

\subsubsection{Constructing basic spatial graph} \label{basic spatial graph construction}
Given a ST dataset, the basic spatial graph $G^s(V, E^s)$ is constructed with the node set $V$ comprising its spatial spots. 
In ST data such as 10x Visium, spots are typically arranged on a hexagonal grid covering the tissue slice. Thus, their spatial relationship can be naturally modeled using a mutual k-nearest neighbor graph(\cite{STAGATE,zong2022const}), where the edge set $E^s$ is defined as:
\begin{equation}
    E^s:=\{(u,v)|u \in N_k(v) \text{ and } v \in N_k(u), \forall u,v \in V\},
\end{equation}
where $N_k(v)$ denotes the $k$-nearest neighbor set of node $v$. We also define a function $I(u,v):(u,v)\mapsto \{1, 2, \cdots, |E^s|\}$ that maps connected node pairs to edge indices.

% $E^s\in \{1,0\}^{|E^S|}$ and edge weight set $W^s\in \mathbb{R}^{|E^S|}$ are defined as:
% \begin{equation}
% \begin{gathered}   
% E_{I(u,v)}^s = W_{I(u,v)}^s=
% \begin{cases}
% 1, \text{if } u \in N_k(v) \text{ and } v \in N_k(u), \\
% 0, \text{otherwise.}
% \end{cases},
%  \end{gathered}
% \end{equation}
% where $I(u,v):(u,v)\mapsto \{1, 2, \cdots, |E_s|\}$ is a function mapping node pairs to edge indices, $N_k(v)$ denotes the $k$-nearest neighbor set of node $v$.  

% where $I(u,v):(u,v)\rightarrow \{1, 2, \cdots, |E_s|\}$ is a function that maps connected node pairs $(u,v)$ in $E^s$ to their corresponding edge indices. $N_k(v)$ denotes the $k$-nearest neighbor set of node $v$.

% remove fixed radius in KDD version
% For a fixed radius $r$:
% \begin{equation}
% e_{I(u,v)}^s =w_{I(u,v)}^s=
% \begin{cases}
% 1, \text{if } dist(u,v)\le r,  \\
% 0, \text{otherwise.}
% \end{cases}
% \end{equation}
% where $dist(u,v)$ is the distance between nodes $u$ and $v$. 

\subsubsection{Constructing label-conditional attributed graph}
We generate labeling-specific graph $G^{(i)}, i \in \{0, 1\}$, conditional on $G^s$, node labels $Y^{(i)}$, and gene expression profiles $X$ using a graph attribute edit function $\mathcal{G}$:
\begin{equation}
    G^{(i)}(V, E^{(i)}, W^{(i)}) = \mathcal{G}(Y^{(i)},G^s, X).
\end{equation}
Here, the types of edge $E^{(i)}$ are defined as:
\begin{equation}
type(E^{(i)}_{I(u,v)}) =
\begin{cases}
t, & \text{if }  y_{u}^{(i)} = y_{v}^{(i)} = t, \\
0, & \text{otherwise.}
\end{cases}, \forall (u,v)\in E^s,
\end{equation}
where $t \in \{1, \cdots, K\}$. 
% Here, $e_{I(u,v)}^{(i)}$ represents the edge attribute between nodes $u$ and $v$, determined by the labels of the nodes. If both nodes share the same label \( t \) and are connected in \( G^s \), the edge attribute is set to \( t \); otherwise, it is 0. 
Additionally, $W^{(i)}\in \mathbb{R}^{|E^{(i)}|}$ is introduced to account for severity variations across mislabel types. For example, a pair of similar spots of the same type should be penalized more if they are assigned distinct labels, while a pair of dissimilar spots of different types should be penalized more if they are assigned the same label. Then, we have:  
\begin{equation} \label{gene similarity}
\resizebox{\linewidth}{!}{$
W^{(i)}_{I(u,v)} = 
\begin{cases}
1-Sim(x_u, x_v), \text{if}\ type(E^{(0)}_{I(u,v)})=0,\\
Sim(x_u, x_v), \text{if } type(E^{(0)}_{I(u,v)})\neq 0 \ \\
% 0, \text{if } type(E^s_{I(u,v)}) = 0,\ \\ 
% 1, \text{otherwise}.
\end{cases}, \forall (u,v)\in E^s,
$}
\end{equation}
where $x_u,x_v\sim \mathbb{R}^g\in X$. $Sim(x_u, x_v)$ is a function measuring the similarity between attributes of nodes $u$ and $v$.

% \subsubsection{Generating distribution of graph attribute manifolds} 
\subsubsection{Extracting graph attribute distribution} 
Due to the randomness inherent to spatial labeling method, the observed annotation result $Y^{(i)} $ merely represents a sample from an unknown underlying distribution $f^{(i)}$. Therefore, we aim to estimate $f^{(i)}$ given the label-conditional attributed graph $ G^{(i)}(V, E^{(i)}, W^{(i)})$. 
% In this section, we aim to generate a distribution of graph attribute manifolds for a given label-conditional graph G(i)(V,E(i),W(i)) G^{(i)}(V, E^{(i)}, W^{(i)}). 
Formally, we extract edge attributes using the function $\mathcal{T}$ in \Cref{attribute function}, which is a one-hot encoding function here:
% \begin{equation}\label{one hot encoding equation}
%     z_{r,k}^{(i)} = \mathcal{T}(E_r^{(i)}) = 
%     \begin{cases}
%         1, & \text{if } type(E_{r}^{(i)}) = k,  \\
%         0, & \text{otherwise.}
%     \end{cases}, \forall k \in \{1, \cdots, K\}, \\
%     \forall r \in \{1, 2, \dots, |E^{(i)}|\}
% \end{equation}

\begin{equation}\label{one hot encoding equation}
    \begin{aligned}
        z_{r,k}^{(i)} &= \mathcal{T}(E_r^{(i)}) = 
        \begin{cases}
            1, & \text{if } type(E_{r}^{(i)}) = k,  \\
            0, & \text{otherwise.}
        \end{cases}, \\ & \forall k \in \{1, \cdots, K\}, \forall r \in \{1, 2, \dots, |E^{(i)}|\}
    \end{aligned}
\end{equation}
 
The edge attribute matrix of $G^{(i)}$ can be represented as $ Z^{(i)} :=[z_1^{(i)},...,z_{|E^{(i)}|}^{(i)}] ^T\in \{0,1\}^{|E^{(i)}|\times K}$, where $ z_{r}^{(i)} :=[z_{r,1}^{(i)},...,z_{r,K}^{(i)}]^T$. $Z^{(i)}$ is then adjusted with the edge weight matrix $\widehat{W}^{(i)}:=[vec(W^{(i)})]^K\in\mathbb{R}^{|E^{(i)}|\times K}$ to account for mismatch severity:

\begin{equation} \label{severity weight}
    Z^{(i)}= Z^{(i)}\odot\widehat{W}^{(i)}.
\end{equation}
The density $f^{(i)}$ of edge attribute distribution $ Z^{(i)}$ is then estimated using a Gaussian kernel density estimator $\mathcal{K}$: 
\begin{equation}
    f^{(i)}(x)=\frac{1}{|E^{(i)}|\times h^K}\sum_{r=1}^{|E^{(i)}|}\mathcal{K}( \frac{x - z_r^{(i)}}{h}), 
    \label{KDE}
\end{equation}
where $h$ is the bandwidth for which a sensitivity analysis is also conducted in Appendix E.

\subsubsection{Computing the discrepancy between graph attribute distributions}
We implement the discrepancy function $\mathcal{D}$ in \Cref{discrepancy function} as a composite function of a sliced Wasserstein distance function $\mathcal{W}$, a symmetric positive definite exponential kernel function $\Xi$, and a maximum-mean discrepancy (MMD) function $\Delta$. The sliced Wasserstein distance function measures the discrepancy between two sample distributions of graph attributes. Since the calculation of one-dimensional Wasserstein distance has closed-form solution, sliced Wasserstein distance greatly improves the computational efficiency for multivariates \cite{bonneel2015sliced}, which is also empirically validated in our complexity analysis in Appendix D. 
% It projects two multivariate distributions to be compared onto multiple random directions, obtaining their one-dimensional representations based on which one-dimensional Wasserstein distance is calculated. The overall distance is computed as the integral of one-dimensional Wasserstein distances over the unit sphere. Since the calculation of one-dimensional Wasserstein distance has closed-form solution, sliced Wasserstein distance greatly improves the computational efficiency for multivariates \cite{bonneel2015sliced,kolouri2016sliced,rabin2012wasserstein}, which is also empirically validated in our complexity analysis in \cref{Complexity}. 
The kernel function $\Xi \circ \mathcal{W}^2$ is a symmetric positive definite kernel and captures higher-order moments of distributional discrepancy in a uniquely-induced reproducing kernel Hilbert space (RKHS), for which we provide a detailed mathematical proof in Appendix A.2 and A.3.
% according to the Moore-Aronszajn theorem \cite{aronszajn1950theory}.
In this uniquely-induced RKHS, the MMD function $\Delta$ computes the expected discrepancy between the two underlying graph attribute distributions. Put together, we have $\mathcal{D}:=(\Delta^2\circ \Xi\circ \mathcal{W}^2)$ and compute the discrepancy between the underlying graph attribute distributions of the clustering result, $ f^{(1)}$, and the ground truth, $f^{(0)}$, as $d$ \cite{MMD}:
\begin{equation}
\resizebox{\linewidth}{!}{$
\begin{aligned}
     & d\in [0,2]=(\Delta^2\circ \Xi\circ \mathcal{W}^2)[ f^{(0)}|| f^{(1)}] =  E_{x,x'\sim f^{(0)}}[\Xi(\mathcal{W}^2(x, x'))] \\ 
     &+ E_{y, y'\sim f^{(1)}}[\Xi(\mathcal{W}^2(y, y'))] - 2E_{x\sim f^{(0)},y\sim f^{(1)}}[\Xi(\mathcal{W}^2(x, y))],
\end{aligned}
$}
\end{equation}
where $x,y\in \mathbb{R}^K$. $d$  is then reduced to:
\begin{equation}
    \resizebox{\linewidth}{!}{$
    \begin{aligned}
        & d= \\
        & \frac{1}{n_{0}^2} \sum_{i=1}^{n_{0}} \sum_{i'=i+1}^{n_{0}} \Xi(\mathcal{W}^2(x_i, x_{i'})) + \frac{1}{n_{1}^2} \sum_{j=1}^{n_{1}} \sum_{j'=j+1}^{n_{1}} \Xi(\mathcal{W}^2(y_j, y_{j'})) \\
        & - \frac{2}{n_{0}n_{1}} \sum_{i=1}^{n_{0}} \sum_{j=1}^{n_{1}} \Xi(\mathcal{W}^2(x_i, y_j)),
    \end{aligned}
    $}
\end{equation}
\begin{equation}
    \Xi(\mathcal{W}^2((x_i, x_{i'})) = \exp\left(-\gamma\mathcal{W}^2(x_i, x_{i'})\right), \gamma >0
\end{equation}
where $n_0$ and $n_1$ represent the number of sampled distributions from $f^{(0)}$ and $f^{(1)}$, respectively.

\section{Experimental Design} 
We design seven experimental cases in which a variety of spatial labeling results of simulated and real ST datasets are evaluated using SLAM and fourteen benchmark metrics. 

% using both simulated and real ST datasets. 

% Specifically, SLAM is used to evaluate the quality of a variety of spatial labeling results based on their similarity with ground truth, 

% which in turn depends on the number of mislabelings (label agreement), the consistency in spatial label distribution, and spot significance (mislabeling severity).  

%  (including three cases in Appendix \ref{Appendix_results})
\subsection{Experimental Cases} 
\paragraph{Label Agreement Degree.}
% \textit{\textbf{Case I}: Increased number of mislabels.} A spatial labeling's correctness and similarity with ground truth depend on the number of mislabels. Thus, an effective evaluation metric should demonstrate deteriorating scores as the number of mislabels increases.
\textit{\textbf{Case I}: Topologically identical spatial labelings with different numbers of mislabels.} An effective evaluation metric should reflect the quality change in spatial labeling due to an altered number of mislabels, even when the topological structure of the labels remains unchanged.
\textit{\textbf{Case II}: Increased number of mislabels} (Appendix C).
% When two spatial labelings share the same topology with the ground truth labels, they have identical structural patterns but differ in label matching. This difference should be captured by th  

\paragraph{Consistency in Spatial Label Distribution.} \textit{\textbf{Case III}: Mislabels at the center versus periphery.} In the ground truth labeling of a tissue section in ST, spots at domain center are typically more definitive in their types than those at the domain periphery. Therefore, mislabeling domain center spots represents a more severe error with greater dissimilarity with the ground truth, which should be reflected by an effective evaluation metric. \textit{\textbf{Case IV}: Aggregated versus dispersed mislabels} (Appendix C).

% \textit{\textbf{Case IV}: Aggregated versus dispersed mislabels.} Spatial labelings with aggregated and dispersed mislabels have distinct semantic meanings and qualities. Such a difference should be captured by the evaluation metric, even when the number of mislabels remains constant.

\paragraph{Mislabeling Severity.}\label{mislabeling Severity} \textit{\textbf{Case V}: False positive mislabels versus false negative mislabels.} In clinical diagnosis, false negatives (mislabeling a diseased spot as normal) are more serious than false positive (mislabeling a normal spot as diseased), leading to lower labeling quality. Thus, the evaluation metric should account for this asymmetry. \textit{\textbf{Case VI}: Mislabels with varying similarity to true labels}.Mislabeling a spot to a biologically similar type (e.g., similar in gene expression profiles) is more acceptable than mislabeling it to a biologically distinct type. The evaluation metric should give a quality score consistent with this error severity gap.
% Mislabeling a spot to a biologically similar type (e.g., similar in gene expression profiles) is more acceptable than mislabeling it to a biologically distinct type. The evaluation metric should give a quality score consistent with this error severity gap.

\subsubsection{Evaluating Spatial Labeling Using Real Spatial Transcriptomics Data} \textit{\textbf{Case VII}: Evaluating spatial labelings in a human breast cancer tissue section.} To evaluate SLAM's effectiveness in practice, three unsupervised methods, including GraphST \cite{Graphst}, SpaGCN \cite{spagcn}, and STAGATE \cite{STAGATE}, are employed to label a complex breast cancer tissue section(10x-hBC-A1). SLAM evaluates these labeling results by measuring their similarity to the expert-curated ground truth labels. 
% \textbf{Case VIII}: \textit{Evaluating spatial labeling in a mouse hypothalamus tissue section} (Appendix \ref{Appendix_results}).

% Mouse Hypothalamus dataset\cite{MERFISH}.
% Additionally, ten evaluation metrics commonly used for spatial labeling in ST are selected as benchmarks (see \Cref{benchmark section}). 

\subsection{Experimental Settings}
\paragraph{Data preparation.}
To simplify experimental evaluation and result interpretation, we use NetworkX \cite{NetworkX} to simulate the graph structures of spatial labeling results and the ground truth for all simulated cases, mimicking potential results encountered in the analysis of ST data from disease tissues. 

To increase the reality of our simulated data, we use a human breast cancer 10x Visium dataset (10x-hBC-H) to match real spots to graph nodes and use real spatial gene expression to compute node similarity. For \textbf{\textit{Case VI}}, we select 10 spots from each of the breast glands, adipose, and cancer regions in the human breast cancer 10x Visium dataset (10x-hBC-H1). In \textbf{\textit{Case VII}}, where real spatial clustering results are obtained by applying real spatial clustering methods to the 10x-hBC-A1 dataset, we follow the workflow guides provided in the original studies. 
% For \textbf{\textit{Case VIII}}, please refer to Appendix \ref{mHypo}.

\paragraph{Benchmark methods.}\label{benchmark section}
% In each experimental case, we use the evaluation metrics suggested by \cite{benchmarks} as benchmarks. These include the supervised metrics (accuracy, precision, recall, and F1 score) and the unsupervised metrics (ARI, NMI, Jaccard Score, V-measure, and FMI) described in \Cref{related work label matching-based}. Additionally, we include five \textit{internal} metrics
% such as Clustering Hierarchy of Anomalies and Outliers Score (CHAOS) \cite{CHAOS1,CHAOS2}, Percentage of Abnormal Spots (PAS) \cite{PAS}, Average Silhouette Width (ASW) \cite{ASW}, Calinski-Harabasz (CH) index \cite{CH-index}, and Davies-Bouldin (DB) index \cite{DB-index}. 
In each experimental case, we use the evaluation metrics suggested by \cite{benchmarks} as benchmarks. These include the supervised metrics (accuracy, precision, recall, and F1 score) and the unsupervised metrics (ARI, NMI, Jaccard Score, V-measure, and FMI) described in \Cref{related work label matching-based}. Additionally, we include five internal metrics, which differ from SLAM and other benchmark metrics in that they do not compare the spatial labeling results with the ground truth. Instead, they evaluate the results internally based on same-label cohesion and distinct-label separation. Their inclusion ensures the comprehensiveness of our benchmarks. The calculations, ranges, and directions of all these metrics are detailed in Appendix F and Appendix G.

\paragraph{Evaluating SLAM and benchmark methods.}
% We evaluate SLAM and the benchmark methods from two perspectives: consistency and sensitivity. Consistency means that changes in output values should align with changes in spatial labeling quality. Sensitivity refers to the method's ability to reflect changes in spatial labeling quality. To achieve this, we design the following \textbf{Q coefficient}: 
% \begin{equation}\label{Evaluating metrics}
%     Q = (-1)^n * \frac{s_{1}-s_{2}}{r},
% \end{equation}
% \begin{equation}
%     n= \begin{cases}
%     0, & \text{if larger metric score indicates lower labeling quality,} \\
%     1, & \text{otherwise.}
% \end{cases},
% \end{equation}
% \begin{equation}
%     r= \begin{cases}
%     \sup_{s}-\inf_{s}, & \text{if}\ \exists\ inf_{s}\ \text{and}\ \exists\ \sup_{s}, \\
%     s_1-\inf_{s}, & \text{if}\ \exists\ inf_{s}\ \text{and}\ \nexists\ \sup_{s}, \\
%     \sup_{s}-s_2, & \text{if}\ \exists\ \sup_{s} \text{and}\ \nexists\ \inf_{s}, \\
%     max(|s_1|,|s_2|) & \text{otherwise.}
% \end{cases},
% \end{equation}

% where $r$ is the range of the output score, $s_1$ and $s_2$ represent the scores for a more erroneous and a less erroneous labeling result, respectively. A large positive Q value indicates the target metric is both consistent (positiveness) and sensitive (large value change) to changes in spatial labeling quality.  
We evaluate SLAM and benchmark methods from two perspectives: consistency and sensitivity. Consistency means that changes in output values should align with changes in spatial labeling quality. Sensitivity refers to the method’s ability to reflect changes in spatial labeling quality. To achieve this, we design a \textbf{\textit{Q} coefficient}, where a large positive value indicates that the corresponding metric is both consistent (positiveness) and sensitive (large value change) to changes in spatial labeling quality.  Refer to Appendix B for details.

% Consistency ensures that output values align with changes in spatial labeling quality, while sensitivity measures the method's ability to reflect such changes. To quantify these aspects, we define a \textit{Q} coefficient that captures both consistency and sensitivity. A large positive \textit{Q} value indicates that the corresponding metric is both consistent (positiveness) and sensitive (large value change) to changes in spatial labeling quality. Details of the calculation are provided in the Appendix \ref{Appendix_B}.

\section{Results}
Results of case II and IV are put in Appendix C. 
\subsection{Label Agreement Degree}
% \subsubsection{\textbf{SLAM correctly reflects labeling quality change due to increasing mislabels (\textit{Case I})}}\label{case I} 
% In this case, we simulate 360 spots, comprising 180 type A and 180 type B spots. We generate ten classification results, progressively increasing the number of A spots misclassified as type B, ranging from 9 to 95 (\Cref{fig:case1}a). Error rate ($1-accuracy$) is employed as an evaluation standard since it effectively measures labeling correctness. \Cref{fig:case1}b shows that SLAM increases with the number of misclassified spots, aligning with the trend observed in error rate. Thus, SLAM faithfully reflects the changes in spatial labeling similarity relative to the ground truth as the degree of label matching declines.

% \begin{figure}[h]
%     \centering
%     \includegraphics[width=\linewidth]{Figures/Case1.jpg}
%     \caption{\textit{Case I}. SLAM reflects changes in similarity due to increasing mislabelings. a, 180 circle spots and 180 triangle spots represent distributions of type A and type B spots, respectively. The red spots represent type A spots misclassified as type B. The number of misclassified spots range from 9 to 95 in ten sequential experiments. A spot in deeper red is misclassified in later experiments.
%     b, the trends of SLAM and error rate with increasing mislabelings.}
%     \label{fig:case1}
% \end{figure}
% \vspace{-15pt}

\begin{table*}[t]
\centering
\caption{\textit{Q} coefficient of SLAM and benchmark metrics in Cases \textit{I}, \textit{III}, \textit{IV\textsuperscript{*}},\textit{V} and \textit{VI}(\textsuperscript{*} See Appendix C). A positive \textit{Q} value (in \textcolor{OliveGreen}{green}) indicates the evaluation metric faithfully reflects the labeling quality change. Otherwise, the value is indicated in \textcolor{Red}{red}. N/A indicates that the metric is not applicable. Only SLAM appropriately works for all cases.}
\vspace{-8pt}
\label{tab:Table 1}
\resizebox{\textwidth}{!}{
\begin{tabular}{c|ccccccccccccccc}
\toprule
% V-measure, CH Index, DB Index
\multirow{2}{*}{\textit{Case-ID}} & \multicolumn{1}{c}{\multirow{2}{*}{\textbf{SLAM}}} & \multicolumn{5}{|c|}{\textit{Unsupervised External}} & \multicolumn{5}{c|}{\textit{Unsupervised Internal}} & \multicolumn{4}{c}{\textit{Supervised}} \\
& & \multicolumn{1}{|c}{ARI} & NMI & Jaccard Score & FMI & \multicolumn{1}{c|}{V-measure} & ASW & CHAOS & PAS & CH Index & \multicolumn{1}{c|}{DB Index} & Accuracy & Precision & Recall & F1 score\\
\midrule
% \hline
% \multicolumn{16}{c}{Label agreement degree} \\
\rowcolor{gray!10} \multicolumn{16}{c}{\textit{Label agreement degree}} \\

\textit{I}  & \textcolor{OliveGreen}{0.257} & \textcolor{Red}{0} & \textcolor{Red}{N/A} & \textcolor{OliveGreen}{0.166} & \textcolor{Red}{0} & \textcolor{Red}{N/A} & \textcolor{Red}{0} & \textcolor{Red}{0} & \textcolor{Red}{0} & \textcolor{Red}{0} & \textcolor{Red}{0}
& \textcolor{OliveGreen}{0.333} & \textcolor{Red}{N/A} & \textcolor{Red}{N/A} & \textcolor{Red}{N/A}  \\
% \hline
% \multicolumn{16}{c}{Consistency in spatial label distribution} \\
\rowcolor{gray!10} \multicolumn{16}{c}{\textit{Consistency in spatial label distribution}} \\
\textit{III} & \textcolor{OliveGreen}{0.103} & \textcolor{Red}{0} & \textcolor{Red}{0} & \textcolor{Red}{0} & \textcolor{Red}{0} & \textcolor{Red}{0}  & \textcolor{OliveGreen}{0.056} & \textcolor{OliveGreen}{0.059} & \textcolor{OliveGreen}{0.133} & \textcolor{OliveGreen}{0.382} & \textcolor{OliveGreen}{0.224}  & \textcolor{Red}{0}  & \textcolor{Red}{0} & \textcolor{Red}{0} & \textcolor{Red}{0}\\
\textit{IV\textsuperscript{*}} & \textcolor{OliveGreen}{0.078} & \textcolor{Red}{0} & \textcolor{Red}{N/A} & \textcolor{Red}{0} & \textcolor{Red}{0} & \textcolor{Red}{N/A} & \textcolor{OliveGreen}{0.125} & \textcolor{OliveGreen}{0.040} & \textcolor{OliveGreen}{0.340} 
& \textcolor{OliveGreen}{0.988} & \textcolor{OliveGreen}{0.900} & \textcolor{Red}{0} & \textcolor{Red}{N/A} & \textcolor{Red}{N/A} & \textcolor{Red}{N/A}\\
% \hline
% \multicolumn{16}{c}{Mislabeling severity} \\
\rowcolor{gray!10} \multicolumn{16}{c}{\textit{Mislabeling severity}} \\
\textit{V} & \textcolor{OliveGreen}{0.110} & \textcolor{Red}{0} & \textcolor{Red}{0} & \textcolor{Red}{0} & \textcolor{Red}{0} & \textcolor{Red}{0} & \textcolor{Red}{0} & \textcolor{Red}{0} & \textcolor{Red}{0} 
& \textcolor{Red}{0} & \textcolor{Red}{0} & \textcolor{Red}{0} & \textcolor{Red}{0} & \textcolor{Red}{0} & \textcolor{Red}{0}\\

\textit{VI}& \textcolor{OliveGreen}{0.073} & \textcolor{Red}{0} & \textcolor{Red}{0} & \textcolor{Red}{0} & \textcolor{Red}{0} & \textcolor{Red}{0} & \textcolor{Red}{0} & \textcolor{Red}{0} & \textcolor{Red}{0}  & \textcolor{Red}{0} & \textcolor{Red}{0} & \textcolor{Red}{0} & \textcolor{Red}{0} & \textcolor{Red}{0} & \textcolor{Red}{0}\\
\bottomrule
\end{tabular}
}
% \begin{flushleft}
% \footnotesize\textsuperscript{*} See Appendix \ref{Appendix_results}.
% \end{flushleft}
\end{table*}

\paragraph{\textbf{SLAM captures mislabel-induced quality change between topologically identical spatial labelings (\textit{Case I}).}} We simulate a dataset of 36 type A spots. Spatial labeling I mislabels 24 type A spots as type B spots on the right side, while spatial labeling II mislabels 12 type A spots as type B spots on the left side (\Cref{fig:case1}). Despite sharing the same topological structure without considering label types, spatial labeling I and II exhibit different mislabel quantities. As a result, all internal metrics remain unchanged between the two labelings, failing to reflect the labeling quality difference due to the number of mislabels (\Cref{tab:Table 1}). Conversely, SLAM, along with the supervised and unsupervised metrics, which are sensitive to the number of mislabels, correctly demonstrate the superiority of spatial labeling II over spatial labeling I. These results indicate SLAM’s effectiveness in detecting mislabel-induced changes in result’s similarity with the ground truth.

\begin{figure}[h]
    \centering
    \captionsetup{belowskip=-10pt}
    \includegraphics[width=\linewidth]{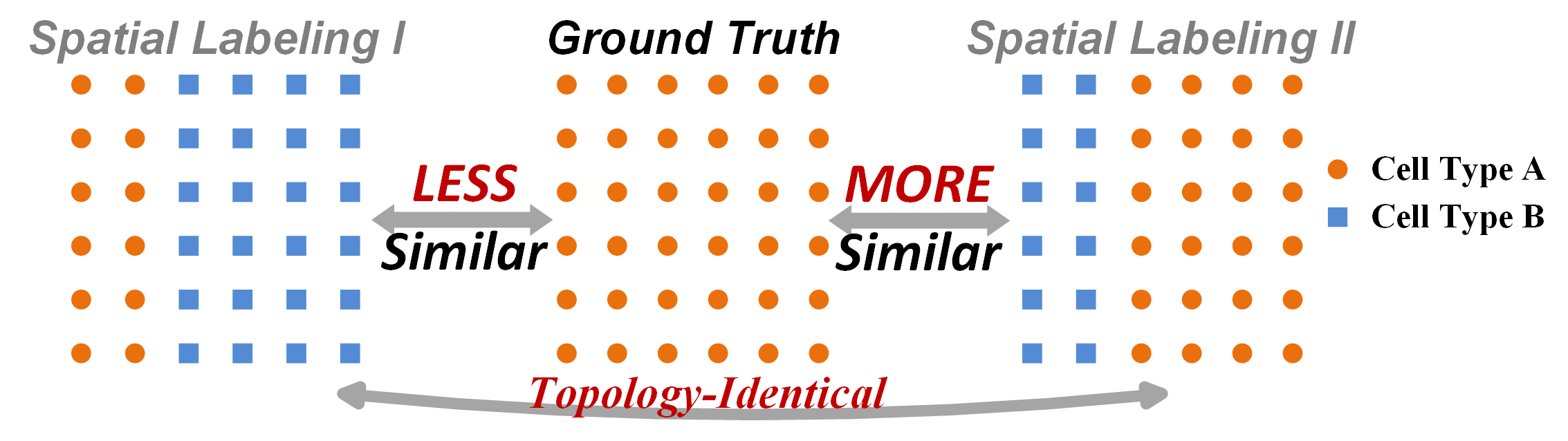}
    \caption{\textit{Case I}. SLAM reflects the difference in similarity to the ground truth labeling between two spatial labelings that have the same topological structure but differ in the number of mislabels.}
    \label{fig:case1}
\end{figure}

\subsection{Consistency in Spatial Label distribution}
\paragraph{\textbf{SLAM differentiates mislabels at core versus periphery regions (\textit{Case III)}.}}
In pathological diagnosis, tumor core region encompasses more definitive tumor spots, whereas the tumor edge region includes plausible tumor spots that resemble adjacent normal tissues. Consequently, mislabeling tumor spots as normal in the core region is more severe and dissimilar to the ground truth compared to mislabeling those at the tumor edge. In this case, we simulate a dataset of 30 spots, with 15 circles and 15 squares representing tumor and normal spots, respectively. As shown in \Cref{fig:case3}, three tumor core spots are mislabeled in spatial labeling I, while three tumor edge spots are mislabeled in spatial labeling II. All supervised and unsupervised benchmark metrics demonstrate unchanged scores for the two labelings due to their insensitivity to the topological change of labels (\Cref{tab:Table 1}). In contrast, SLAM and the five internal metrics exhibit a positive \textit{Q} coefficient, indicating their ability to correctly capture the quality gap between the two spatial labelings due to changes in mislabeling locations.

\begin{figure}[h]
    \centering
    \captionsetup{belowskip=-10pt}
    \includegraphics[width=\linewidth]{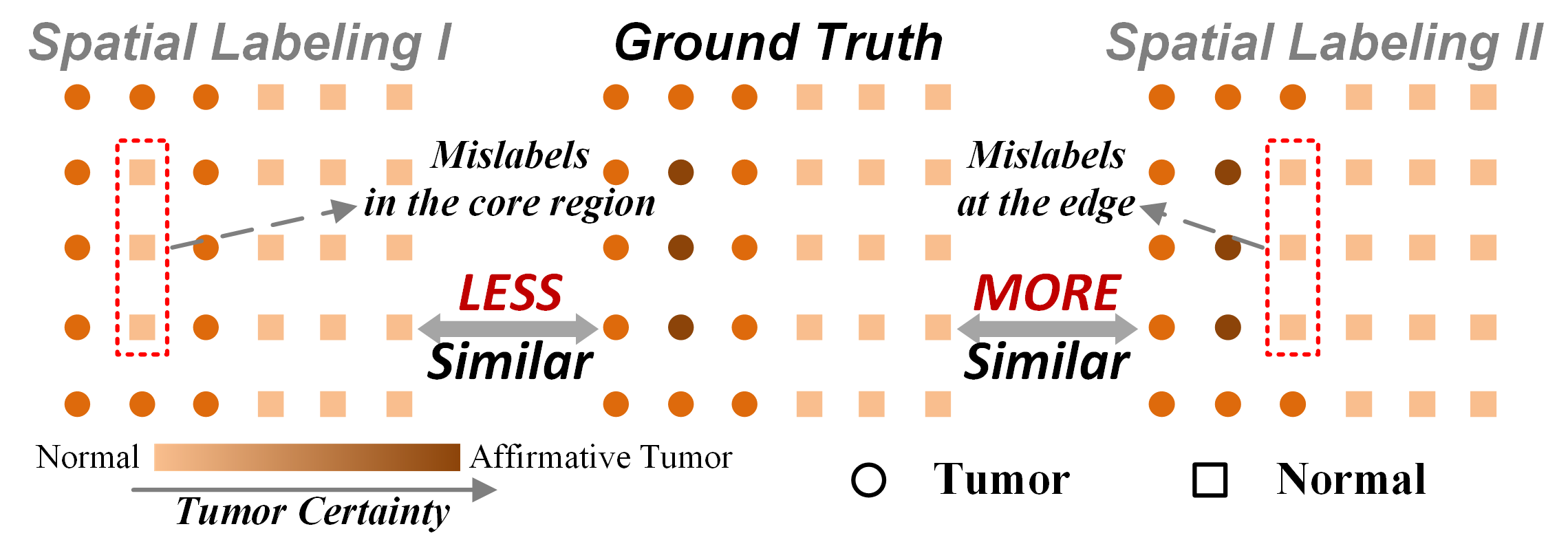}
    \caption{\textit{Case III}. In the ground truth, circles represent tumor spots and squares represent normal spots. The color bar indicates the certainty of being a tumor, with the leftmost color corresponding to normal spots and the rightmost color to affirmative tumor spots. 
    }
    \label{fig:case3}
\end{figure}

\begin{table*}[h]
\centering
\caption{Evaluation of three unsupervised spatial labeling methods using Human Breast Cancer Dataset. The ranking of these methods are indicated after their names, with \textcolor{DeepBlue}{deepblue}, \textcolor{Blue}{blue}, \textcolor{LightBlue}{lightblue} colors corresponding to \textcolor{DeepBlue}{1st}, \textcolor{Blue}{2nd}, and \textcolor{LightBlue}{3rd}, respectively. ($\uparrow/\downarrow$: Higher/lower values indicate better performance.) 
}
\vspace{-8pt}
\label{tab:Table 2}
\resizebox{\textwidth}{!}{
\begin{tabular}{l|c|ccccc|ccccc}
\toprule
Methods & \textbf{SLAM} $\downarrow$ & ARI$\uparrow$ & NMI $\uparrow$ & Jaccard Score $\uparrow$ & FMI $\uparrow$ & V-measure $\uparrow$ & ASW $\uparrow$ & CHAOS $\downarrow$  & PAS $\downarrow$ & CH index $\uparrow$ & DB index $\downarrow$  \\
\midrule
\textcolor{DeepBlue}{GraphST} & \textcolor{DeepBlue}{1.002} &  \textcolor{Blue}{0.121} &  \textcolor{Blue}{0.219} & \textcolor{Blue}{0.195} & \textcolor{Blue}{0.531} & \textcolor{Blue}{0.219} & \textcolor{LightBlue}{-0.111} & \textcolor{Blue}{0.202} & \textcolor{Blue}{0.488} & \textcolor{Blue}{25.290} & \textcolor{Blue}{6.885} \\
\textcolor{Blue}{STAGATE} & \textcolor{Blue}{1.134} & \textcolor{DeepBlue}{0.164} & \textcolor{DeepBlue}{0.221} & \textcolor{DeepBlue}{0.207} & \textcolor{DeepBlue}{0.544} & \textcolor{DeepBlue}{0.221} & \textcolor{Blue}{-0.104} & \textcolor{DeepBlue}{0.191} & \textcolor{DeepBlue}{0.390} & \textcolor{DeepBlue}{28.506} & \textcolor{DeepBlue}{3.226} \\
\textcolor{LightBlue}{SpaGCN} & \textcolor{LightBlue}{1.680} &  \textcolor{LightBlue}{0.032} &  \textcolor{LightBlue}{0.180} & \textcolor{LightBlue}{0.143} & \textcolor{LightBlue}{0.372} & \textcolor{LightBlue}{0.180} & \textcolor{DeepBlue}{-0.085} & \textcolor{LightBlue}{0.240} & \textcolor{LightBlue}{0.775} & \textcolor{LightBlue}{7.224} & \textcolor{LightBlue}{12.880} \\
\bottomrule
\end{tabular}}
\end{table*}

\subsection{Mislabeling Severity}
\paragraph{\textbf{SLAM differentiates false positive and false negative errors (\textit{Case V}).}} 
As explained in \Cref{mislabeling Severity}, false negatives (FNs) represent a more severe error than false positives (FPs) in clinical setting. This severity gap results in a difference in similarity with the ground truth labeling. Here, we assess whether SLAM and the benchmark metrics can capture this similarity difference. Specifically, we simulate 15 normal spots and 15 cancer spots in the ground truth labeling. Spatial labeling I includes six FNs, while spatial labeling II includes six FPs (\Cref{fig:case5}). The FPs and FNs are symmetrically distributed in the two labelings to eliminate topological variations. \Cref{tab:Table 1} shows that all benchmark metrics have zero \textit{Q} values, indicating their inability to capture the difference between the two labelings, as they focus solely on the number of mislabelings or the topological structure, both of which remain unchanged across the two labelings. In contrast, SLAM’s positive \textit{Q} value indicates its ability to recognize that the quality of spatial labeling I is inferior to that of spatial labeling II. This is because SLAM’s edge weight function (\Cref{severity weight}) assigns larger weights to edges between cancer spots, resulting in FNs being overweighted when measuring similarity with the ground truth.

\begin{figure}[h]
    \centering
    \captionsetup{belowskip=-10pt}
    \includegraphics[width=\linewidth]{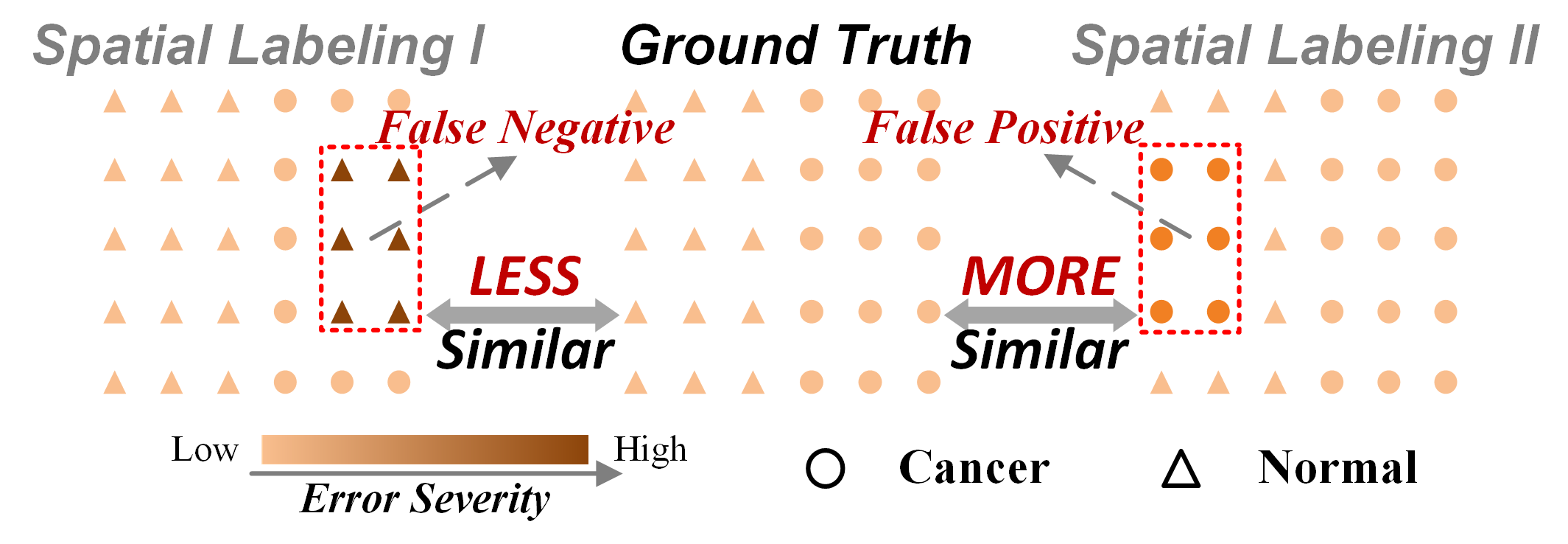}
    \caption{\textit{Case V}. Triangles represent normal spots and circles represent cancer spots. The color bar indicates the error severity level, with the leftmost color corresponding to error-free labels and the rightmost color to the most severe errors. 
    % Spatial labeling I includes six FNs, while spatial labeling II includes six FPs.
    }
    \label{fig:case5}
\end{figure}

\paragraph{\textbf{SLAM differentiates mislabels with different similarities to true labels (\textit{Case VI}).}}  
In this case, the ground truth labeling comprises an equal number (10) of randomly selected spots from the adipose, breast gland, and cancer regions in the 10x-hBC-H dataset (\Cref{fig:case6}). Three adipose spots and three cancer spots are mislabeled as breast gland spots in spatial labeling I and II, respectively. The mislabel locations in the two spatial labelings are mirror-symmetric, ensuring that their label topological structure remains unchanged. Spots from the breast gland and adipose regions are more similar in gene expression, with an average normalized cosine similarity of 0.791, compared to 0.673 between spots from the breast gland and cancer tissues. Mislabeling breast gland spots as adipose spots leads to a labeling more similar to the ground truth than mislabeling them as cancer spots, thus a less severe error. Since spatial labeling I and II have an equal number of mislabels and identical topological structures, they differ only in mislabeling severity, which should be reflected by the evaluation metric. In \Cref{tab:Table 1}, all metrics except SLAM exhibit unchanged scores, as indicated by their zero \textit{Q} values, due to their insensitivity to mislabeling severity. SLAM, on the other hand, assigns edge weights based on the gene similarity between spots, thereby effectively capturing the labeling quality variation induced by this type of mislabeling severity.

\begin{figure}[h]
    \centering
    \captionsetup{belowskip=-10pt}
    \includegraphics[width=\linewidth]{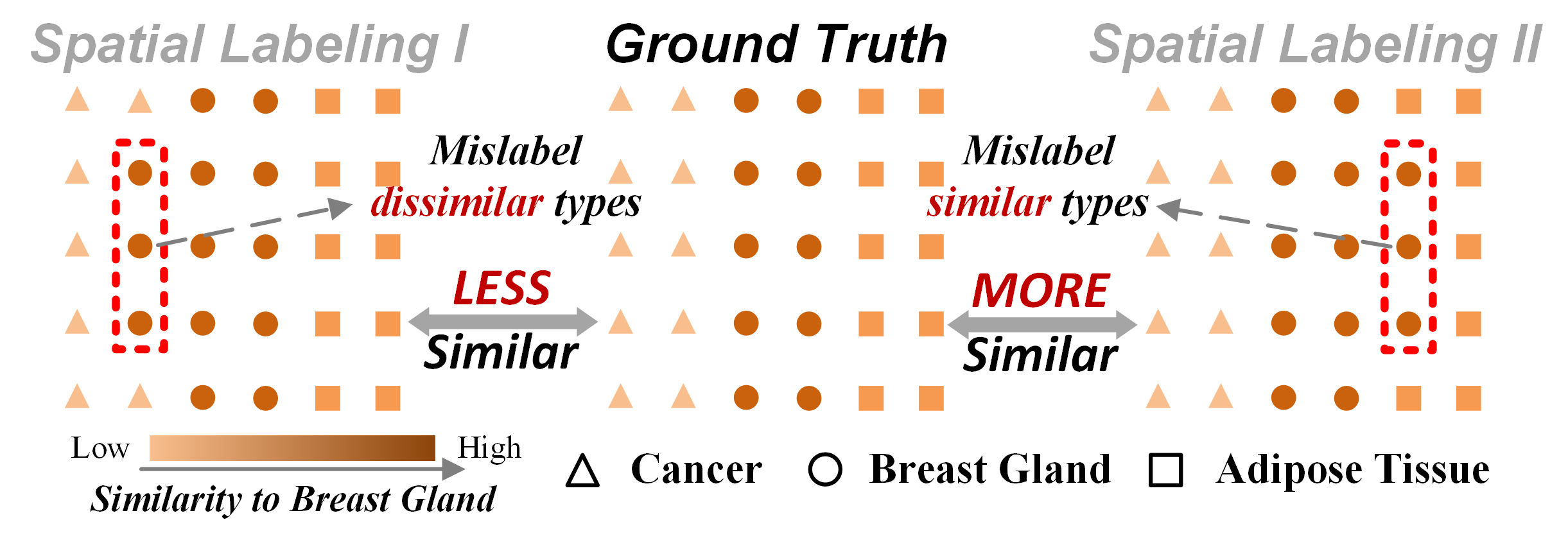}
    \caption{\textit{Case VI}. Triangles, circles, and squares represent adipose spots, breast gland spots, and breast cancer spots, respectively. The color bar indicates the similarity level in gene expressions with the breast gland tissue. 
 }
    \label{fig:case6}
\end{figure}

\subsection{Evaluating Spatial Labeling Results Using Real Spatial Transcriptomics Data (\textit{Case VII})}\label{Case_VII}

We evaluate SLAM’s effectiveness in assessing spatial labeling results using a real human breast cancer dataset (slice A1, 10x-hBC-A1)\cite{andersson2020spatial}. Since most spatial labeling methods for ST are unsupervised, we selected three well-established spatial clustering methods—SpaGCN, GraphST , and STAGATE  —to generate spatial labeling results. As these methods are unsupervised, we include only unsupervised and internal metrics as benchmarks. As shown in \Cref{fig:case7}, the results of GraphST and STAGATE visually resemble the ground truth more closely than that of SpaGCN. A closer comparison reveals that GraphST outperforms STAGATE, particularly in terms of mismatch severity—compared to GraphST, STAGATE generates significantly more FNs by mislabeling invasive cancer spots as connective tissue within the encircled region. \Cref{tab:Table 2} demonstrates that the unsupervised and internal benchmarks (except ASW) correctly reflect the inferior quality of SpaGCN compared to the other two clustering methods. However, SLAM stands out as the sole metric that identifies GraphST’s superiority over STAGATE. This is likely because SLAM concurrently considers label agreement, spatial organization, and mismatch severity,  offering a more comprehensive and unbiased evaluation.

% Specifically, STAGATE mislabels a large portion of cancer cells as normal, while GraphST mislabels fewer cells overall (GraphST: 45.30\% vs. STAGATE: 61.54\%). 
% Additionally, many of the mislabeled cells in STAGATE are false negatives (FNs), making its results worse from a mismatch severity perspective. Lastly, in terms of spatial continuity, STAGATE's mislabeled and correctly labeled cells are intermixed without forming clear clusters, while GraphST's results are form distinct, contiguous regions, much closer to the ground truth. This discrepancy is likely because SLAM considers both label agreement and spatial organization, providing a more comprehensive and unbiased evaluation.

\begin{figure}[h]
    \centering
    \captionsetup{belowskip=-10pt}
    \includegraphics[width=\linewidth]{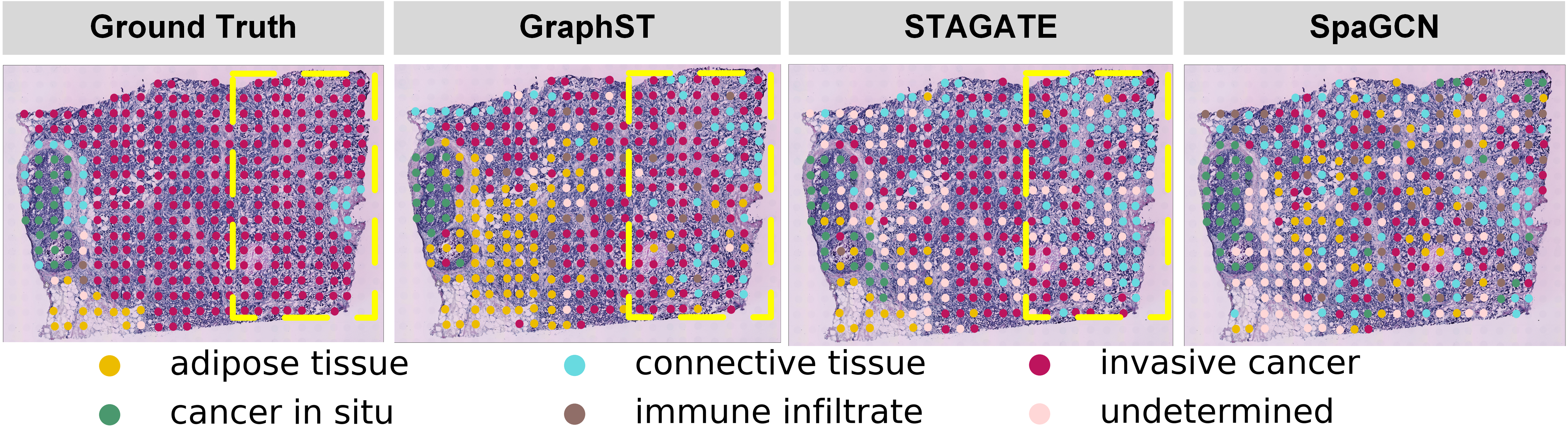}
    \caption{\textit{Case VII}. Three real clustering results of 10x-hBC-A1 datasets. GraphST outperforms STAGATE within the encircled region.}
    \label{fig:case7}
\end{figure}

% A similar phenomenon, highlighting the limitations of existing metrics, is observed in the Human Breast Cancer Dataset. \hl{Comprehensive} experimental results are presented in Appendix \ref{mHypo}.

\section{Conclusion}
We propose a methodological framework for measuring spatial labeling similarity. By accounting for all aspects of label agreement, spatial label distribution, and mismatch severity, our framework addresses the limitations of existing methods and provides a guideline for developing comprehensive and unbiased methods. Under this framework, we implement SLAM, a novel metric for evaluating spatial labeling result in ST based on its similarity to the ground truth labeling. SLAM exemplifies a concrete implementation the framework with detailed workflow steps. Through extensive carefully designed experimental cases involving both simulated and real ST data, SLAM demonstrates its superiority in accurately reflecting spatial labeling quality, highlighting the effectiveness of the proposed framework.

\newpage

\section*{Acknowledgments}
The project is funded by the Excellent Young Scientist Fund of Wuhan City (Grant No. 21129040740) to X.S.

\section*{Contribution Statement}
Yihang Du and Jiaying Hu contributed equally.

%% The file named.bst is a bibliography style file for BibTeX 0.99c
\bibliographystyle{named}
\bibliography{ijcai25}

\clearpage
\appendix
\onecolumn

\section*{\center \LARGE A Methodological Framework for Measuring Spatial Labeling Similarity}
\vspace{20pt}

\vspace{15pt}
{\centering \LARGE Appendix \par}

\vspace{20pt}

The appendix is organized as follows: In \textbf{Appendix \ref{Appendix_A}}, we provide more technical details of the framework and implementation, including the label matching function $\mathcal{M}$, and the theoretical foundation of the sliced Wasserstein distance and sliced Wasserstein Gaussian kernel. \textbf{Appendix \ref{Appendix_B}} gives the detailed calculation of the \textit{Q} coefficient. \textbf{Appendix \ref{Appendix_results}} includes the results and analysis of experimental cases II and IV. In \textbf{Appendix \ref{Complexity} and  \ref{sensitivity analysis}}, we perform the complexity analysis and sensitivity analysis on SLAM. \textbf{Appendix \ref{Metrics Introduction} }and\textbf{ \ref{The range and direction of benchmarks}} provide an overview of the benchmark evaluation metrics used in our experiments. \textbf{Appendix \ref{Appendix_fig}} visualize the implementation of our proposed framework as SLAM. Finally, the algorithmic pseudo-codes of SLAM are provided in \textbf{Appendix \ref{pseudocode}}.

\section{Additional Details on the Method} \label{Appendix_A}
\subsection{Implementations of $\mathcal{M}$ Function} \label{Appendix_M_function}
Assume $\mathcal{A}=\{a_1, a_2, \cdots, a_{K_1}\}$ and $\mathcal{K}=\{1, 2, \cdots, K\}$ represent the label spaces of a spatial labeling result and the ground truth, respectively. That is, there are $K_1$ distinct labels in $\mathcal{A}$ and $K$ distinct labels in $\mathcal{K}$. We first calculate Jaccard coefficients of all pairs of predicted and true clusters:
\begin{equation}
    J_{u,v}=\frac{|C_{u}\cap C_{v}|}{|C_{u}\cup C_{v}|}, \forall u \in \mathcal{A}, v \in  \mathcal{K}
\end{equation}
where $C_u$ denotes the set of spots in cluster $u$. Then we match $u$ to a true label $\iota\in \mathcal{K}$ as:

\begin{equation}
    u\mapsto\iota:=\{\max_v J_{u,v},|\ \forall J_{u,v}\in J_{u,\cdot}\}.
\end{equation}
For any true cluster $v \in \mathcal{ K}$, denote its set of assigned predicted clusters as $L_v:=\{u|u\mapsto v, \forall u\in \mathcal{A}\}$. If $\exists o\in \mathcal{K}, s.t. L_o=\emptyset$, the algorithm proceeds as:

\textbf{Cluster Reassignment}. When $K_1\ge K$, we reassign a predicted cluster $\tau\mapsto t$ to $o$ as:
\begin{equation}
\begin{aligned}
&\tau:=\{u|\max_u J_{u,o}, \forall J_{u,o}\in J_{\cdot,o}\}\mapsto o,\ if\\  
      &||L_{t}||>1\ and\ \tau\neq \{\max_u J_{u,t},|\ \forall J_{u,t}\in J_{\cdot,t}\}
\end{aligned}
\end{equation}
This step attempts to reassign an unmatched true label $o$ to a predicted label $\tau$, with which $o$ has maximal Jaccard coefficient, provided that $\tau$ does not already have the maximal Jaccard coefficient with its originally matched true label $t$. If this reassignment fails, $J_{\cdot,o}=J_{\cdot,o}\textbackslash{}J_{\tau,o}$, and the reassignment process is repeated until a satisfactory predicted label is identified. The process then continues to the next unmatched true label until $\nexists o\in \mathcal{K}\ s.t.\ L_o=\emptyset$. 

\textbf{Cluster Split}. When $K_1< K$, let $O=\{o|L_o=\emptyset, \forall o \in \mathcal{ K}\}$.  $\forall o\in O$, a new cluster $\phi$ is split from a predicted cluster $\tau\rightarrow t$ and assigned to it: 
\begin{equation}
    \tau:=\{u|\max_u J_{u,o}, \forall J_{u,o}\in J_{o,\cdot}\},
\end{equation}
\begin{equation}
    C_\phi:=\{c_n\in C_\tau|dist_{min}(c_n, C_o)<dist_{min}(c_n,C_t)\}\rightarrow o,
\end{equation}
where $dist_{min}(c_n, C)=\min_{\forall c_x\in C} dist(c_n,c_x)$. In this step, a predicted cluster $\tau$ that has the maximal Jaccard coefficient with $o$ is selected. Spots in $\tau$ that are located closer to the $o$ domain than to the $\tau$ domain form a new cluster $\phi$, which is then matched to $o$.  

% method sup2 severity

% \subsection{Detailed Description of the $W^{(i)}$} \label{Appendix_W_function}

% % 3.2 Constructing label-conditional attributed graph
% Here we provide an example focusing on node attribute similarity. Specifically, a pair of similar nodes of the same type should be penalized more if they are assigned distinct labels, while a pair of dissimilar nodes of different types should be penalized more if they are assigned the same label. Then, we have:  
% \begin{equation} \label{gene similarity}
% \resizebox{\textwidth}{!}{$
% w^{(i)}_{I(u,v)} = 
% \begin{cases}
% 1-Sim(x_u, x_v), \text{if}\ e^{(0)}_{I(u,v)}=0, \ \text{and}\  e^s_{I(u,v)} \ne 0\\
% Sim(x_u, x_v), \text{if } e^{(0)}_{I(u,v)}=1, \ \\
% 0, \text{if } e^s_{I(u,v)} = 0.
% \end{cases}$
% }
% \end{equation}
% where $x_u,x_v\sim \mathbb{R}^g\in X$, $w_{I(u,v)}^{(i)}\in W^{(i)}$. $Sim(x_u, x_v)$ is a function measuring the similarity between attributes of nodes $u$ and $v$.

\subsection{Definitions and Properties of Sliced Wasserstein Distance}\label{swd}
% -------------------Computing the discrepancy between graph attribute distributions
The sliced Wasserstein distance function measures the discrepancy between two sample distributions of graph attributes. It projects two multivariate distributions to be compared onto multiple random directions, obtaining their one-dimensional representations based on which one-dimensional wasserstein distance is calculated. The overall distance is computed as the integral of one-dimensional Wasserstein distances over the unit sphere. Since the calculation of one-dimensional Wasserstein distance has closed-form solution, sliced Wasserstein distance greatly improves the computational efficiency for multivariates \cite{bonneel2015sliced,kolouri2016sliced,rabin2012wasserstein}. 

\begin{definition}\label{Sliced WD} 
(\textbf{Sliced Wasserstein Distance}). Assume $\mathcal{P}$ and $\mathcal{Q}$ are two continuous d-dimensional probability measures with probability density functions $p$ and $q$, respectively. Let $v$ denote a random projection direction. The sliced Wasserstein distance between $p$ and $q$ is defined as:
\begin{equation}
    \begin{split}
        \mathcal{W}^2(\mathcal{P},\mathcal{Q}):=\underset{S^{d-1}}{\int}W^2(R(p(.,v)),R(q(.,v)))dv
    \end{split}
\end{equation}
where $R$ is the d-dimensional Radon transform function \cite{kolouri2016sliced} used to project probability densities of $\mathcal{P}$ and $\mathcal{Q}$ onto direction $v$. $W$ represents the $L2$ Wasserstein distance kernel. 
\end{definition}

\begin{definition}
    \label{CND}
    (\textbf{Conditionally Negative Definite Kernel}) A conditionally negative definite kernel is a symmetric function $k$: $\mathbb{S}\times \mathbb{S}\rightarrow \mathbb{R}$ that satisfies:
    \begin{equation}
        \sum_i^N\sum_j^Na_ia_jk(s_i,s_j)\le0,\forall s_i\in \mathbb{S}, \forall a_i\in \mathbb{R}\  \text{s.t.}\ \sum_i^Na_i=0.
    \end{equation}
\end{definition}
\begin{properties}\label{SW property}
\textbf{(Conditionally Negative Definiteness).} Sliced Wasserstein distance kernel is conditionally negative definite. 
\end{properties}
\begin{proof}
According to Brenier's theorem \cite{brenier1991polar,kolouri2016sliced}, $L2$ Wasserstein distance can be computed as:
\begin{equation}
    W^2(P,Q)=\int (\mathcal{T}_{\mathcal{P}\rightarrow\mathcal{Q}}(x)-x)^2q(x)dx,
\end{equation}
where $\mathcal{T}_{\mathcal{P}\rightarrow\mathcal{Q}}(x)$ is an unique optimal transport function that maps measure space $\mathcal{P}$ to $\mathcal{Q}$. It is monotonically increasing and satisfies:
\begin{equation}
 \mathcal{T}_{\mathcal{P}\rightarrow\mathcal{Q}}(x):=   \inf_{t\in \mathbb{R}} \{\int_{-\infty}^{t} p(t)dt \ge \int_{-\infty}^{x} q(x)dx\}
\end{equation}
We define an inner product space $\mathcal{I}$ over functions $f$, $g$: $\mathbb{R}\rightarrow \mathbb{R}$ as:
\begin{equation}
    \langle f,g\rangle _{\mathcal{I}}:=\int f(x)g(x)dx
\end{equation}
and a function:
% \begin{equation}
%     h_{\mathcal{P}\rightarrow\mathcal{Q}}(x):=(\mathcal{T}_{\mathcal{P}\rightarrow\mathcal{Q}}(x)-x)\sqrt{q(x)}
% \end{equation}
\begin{equation}
    h_{\mathcal{P}\rightarrow\mathcal{Q}}(x):=(\mathcal{T}_{\mathcal{P}\rightarrow\mathcal{Q}}(x)-x)\sqrt{q(x)}
\end{equation}
Then we have:
\begin{equation}
\begin{aligned}
    &W^2(\mathcal{P},\mathcal{Q})=W^2(\mathcal{Q},\mathcal{P})=\langle h_{\mathcal{P}\rightarrow\mathcal{Q}},h_{\mathcal{P}\rightarrow\mathcal{Q}}\rangle_{\mathcal{I}},\\
    &W^2(\mathcal{Q},\mathcal{Q})=0,\\
    &W^2(\mathcal{P},\Pi)=W^2(\Pi,\mathcal{P})=\int (\mathcal{T}_{\Pi\rightarrow\mathcal{P}}(x)-x)^2 p(x)dx\\
    &=\int (\mathcal{T}_{\Pi\rightarrow\mathcal{P}}(\mathcal{T}_{\mathcal{P}\rightarrow\mathcal{Q}}(u))-\mathcal{T}_{\mathcal{P}\rightarrow\mathcal{Q}}(u))^2\underbrace{p(\mathcal{T}_{\mathcal{P}\rightarrow\mathcal{Q}}(u))\mathcal{T}'_{\mathcal{P}\rightarrow\mathcal{Q}}(u)}_{p(\mathcal{T}_{\mathcal{P}\rightarrow\mathcal{Q}}(u))d\mathcal{T}_{\mathcal{P}\rightarrow\mathcal{Q}}(u)=q(u)du}du\\
    &=\int (\mathcal{T}_{\Pi\rightarrow\mathcal{P}}(\mathcal{T}_{\mathcal{P}\rightarrow\mathcal{Q}}(u))-\mathcal{T}_{\mathcal{P}\rightarrow\mathcal{Q}}(u))^2q(u)du\\
    &=\int ((\mathcal{T}_{\Pi\rightarrow\mathcal{Q}}(u)-u)-(\mathcal{T}_{\mathcal{P}\rightarrow\mathcal{Q}}(u)-u))^2du\\
    &=\langle h_{\Pi\rightarrow\mathcal{Q}}-h_{\mathcal{P}\rightarrow\mathcal{Q}}, h_{\Pi\rightarrow\mathcal{Q}}-h_{\mathcal{P}\rightarrow\mathcal{Q}}\rangle_\mathcal{I}
\end{aligned}
\end{equation}
Then we have:
\begin{equation}
    \begin{split}
        &\sum_{i}^{N}\sum_{j}^{N}a_ia_jW^2(\mathcal{P}_i,\mathcal{P}_j)\\
        &=\sum_{i}^{N}\sum_{j}^{N}a_ia_j\langle h_{\mathcal{P}_i\rightarrow\mathcal{Q}}-h_{\mathcal{P}_j\rightarrow\mathcal{Q}}, h_{\mathcal{P}_i\rightarrow\mathcal{Q}}-h_{\mathcal{P}_j\rightarrow\mathcal{Q}}\rangle_\mathcal{I}\\
        &=\underbrace{2\sum_i^N a_i^2\langle h_{\mathcal{P}_i\rightarrow\mathcal{Q}}, h_{\mathcal{P}_i\rightarrow\mathcal{Q}}\rangle_\mathcal{I}}_{=0}-2\sum_i^N\sum_j^Na_ia_j\langle h_{\mathcal{P}_i\rightarrow\mathcal{Q}}, h_{\mathcal{P}_j\rightarrow\mathcal{Q}}\rangle_\mathcal{I}\\
        &=-2\langle \sum_i^N a_ih_{\mathcal{P}_i\rightarrow\mathcal{Q}},\sum_i^N a_ih_{\mathcal{P}_i\rightarrow\mathcal{Q}}\rangle_{\mathcal{I}}\le 0
    \end{split}
\end{equation}
Thus, for sliced Wasserstein distance, we have:
\begin{equation}
    \begin{split}
&\sum_{i}^{N}\sum_{j}^{N}a_ia_j\mathcal{W}^2(\mathcal{P}_i,\mathcal{P}_j)\\
        &=\sum_{i}^{N}\sum_{j}^{N}a_ia_j\underset{S^{d-1}}{\int} W^2(R(p_i(.,v)),R(p_j(.,v)))dv\\
        &=-2\underset{S^{d-1}}{\int} \langle \sum_i^N a_ih_{R(p_i(.,v))\rightarrow R(p_j(.,v))},\sum_i^N a_ih_{R(p_i(.,v))\rightarrow R(p_j(.,v))}\rangle_{\mathcal{I}}dv\\
        &\le 0
    \end{split}
\end{equation}
This completes the proof.
\end{proof}

\subsection{Definition and Properties of Sliced Wasserstein Gaussian Kernel}
\label{RKHS}
\begin{definition}
    \label{SWG def}
(\textbf{Sliced Wasserstein Gaussian Kernel}). Sliced Wasserstein Gaussian Kernel is a symmetric function $\kappa$: $\mathbb{S}\times \mathbb{S}\rightarrow \mathbb{R}$:
\begin{equation}
    \kappa(s_i, s_j) := \exp(-\psi \mathcal{W}^2(s_i, s_j)), \forall s_i\in \mathbb{S},\psi>0
\end{equation}
\end{definition}
\begin{properties}
    \label{SWG properties}
    (\textbf{Positive Definiteness and Inducibility of Unique RKHS}). Sliced Wasserstein Gaussian kernel is positive definite and induces a unique reproducible kernel Hilbert space (RKHS).
\end{properties}
\begin{proof}
As proved by C.Berg etc\cite{berg1984general,berg1984harmonic}: 
\begin{displayquote}
    ``A kernel $k(s_i,s_j)=\exp(-\psi f(s_i, s_j)),\psi>0$ is positive definite $\iff f$ is conditionally negative definite."
\end{displayquote}
Since $\mathcal{W}^2$ is conditionally negative definite ( \Cref{SW property}), $\kappa(s_i, s_j) = \exp(-\psi \mathcal{W}^2(s_i, s_j))$ is positive definite.Due to Moore-Aronszajn theorem \cite{aronszajn1950theory}, $\kappa$, as a symmetric and positive definite kernel, induces a unique RKHS. 
\end{proof}
\begin{proposition}
\label{SWG moments}
Sliced Wasserstein Gaussian kernel captures all moments of the squared sliced Wasserstein distance.
\end{proposition}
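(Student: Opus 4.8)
The plan is to expose the moment structure concealed inside the exponential by expanding it as a Maclaurin series, so that each power of $\mathcal{W}^2$ appears explicitly as a separate moment. Since $\exp$ is an entire function, for every pair $s_i, s_j \in \mathbb{S}$ the Sliced Wasserstein Gaussian kernel of \Cref{SWG def} admits the absolutely convergent expansion
\begin{equation}
    \kappa(s_i, s_j) = \exp(-\psi\mathcal{W}^2(s_i, s_j)) = \sum_{m=0}^{\infty} \frac{(-\psi)^m}{m!}\bigl[\mathcal{W}^2(s_i, s_j)\bigr]^m .
\end{equation}
Because the coefficient $(-\psi)^m/m!$ is nonzero for every $m\ge 0$, each power $\bigl[\mathcal{W}^2\bigr]^m$ of the squared sliced Wasserstein distance enters $\kappa$ with strictly nonzero weight, so no order of the distance is annihilated by the kernel.

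Next I would transport this expansion into the discrepancy functional. By \Cref{SWG properties}, the kernel $\Xi=\kappa$ is positive definite and possesses a feature map $\phi$ with $\kappa(s_i,s_j)=\langle\phi(s_i),\phi(s_j)\rangle_{\mathcal H}$; substituting the series into the MMD expression for $d$ and exchanging summation with the expectations gives
\begin{equation}
\begin{aligned}
    d &= \sum_{m=0}^{\infty} \frac{(-\psi)^m}{m!}\Bigl( E_{x,x'\sim f^{(0)}}\bigl[(\mathcal{W}^2(x,x'))^m\bigr] + E_{y,y'\sim f^{(1)}}\bigl[(\mathcal{W}^2(y,y'))^m\bigr] \\
    &\qquad - 2\,E_{x\sim f^{(0)},\,y\sim f^{(1)}}\bigl[(\mathcal{W}^2(x,y))^m\bigr] \Bigr).
\end{aligned}
\end{equation}
The $m=0$ term is constant and cancels, while every remaining summand is exactly the difference between the $m$-th moments of $\mathcal{W}^2$ under the three sampling schemes. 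This makes precise the sense in which the kernel ``captures'' all moments: the induced-RKHS discrepancy is an infinite weighted aggregate in which every moment of the squared sliced Wasserstein distance contributes with a nonzero coefficient.

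The key steps, in order, are: (i) write the Maclaurin expansion and observe that no coefficient vanishes; (ii) justify that the series may be integrated term by term against $f^{(0)}$ and $f^{(1)}$; and (iii) identify the $m$-th summand with the corresponding moment difference. I expect step (ii) to be the main obstacle, since interchanging the infinite sum with the expectations requires a domination argument. I would secure it by noting that the edge-attribute vectors generated in \Cref{one hot encoding equation} and reweighted in \Cref{severity weight} take values in a bounded set, so $\mathcal{W}^2$ is bounded on the supports of $f^{(0)}$ and $f^{(1)}$; the partial sums of the series are then dominated by $\exp(\psi\,\sup\mathcal{W}^2)$, and dominated convergence licenses the term-by-term reading. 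With this uniform bound established, the remaining identifications in steps (i) and (iii) are routine.
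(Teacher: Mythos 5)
Your proof takes essentially the same route as the paper: the paper's entire argument is the Maclaurin expansion $\exp(-\psi \mathcal{W}^2) = \sum_{m\ge 0}\frac{(-\psi)^m}{m!}(\mathcal{W}^2)^m$, read as a linear combination of all powers of the squared sliced Wasserstein distance with nonvanishing coefficients. Your additional steps --- pushing the series through the MMD functional and justifying the interchange of sum and expectation via boundedness --- go beyond what the paper proves but are consistent elaborations of the same idea rather than a different argument.
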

\begin{proof}
Using Taylor expansion, we have:
\begin{equation}
        \begin{split}
             \kappa(s_i, s_j) &= \exp(-\psi \mathcal{W}^2(s_i, s_j))\\
             &=\sum_i^{+\infty}\frac{1}{i!}(-\psi)^i(\mathcal{W}^2(s_i, s_j))^i,
        \end{split}
    \end{equation}
which is a linear combination of all moments of squared sliced Wasserstein distance.
\end{proof}

% Q coef
\section{Calculation of the \textit{Q} coefficient}\label{Appendix_B}
Given a spatial labeling evaluation metric, its $Q$ coefficient is computed as:
\begin{equation}\label{Evaluating metrics}
    Q = (-1)^n * \frac{s_{1}-s_{2}}{r},
\end{equation}
\begin{equation}
    n= \begin{cases}
    0, & \text{if larger metric score indicates lower labeling quality,} \\
    1, & \text{otherwise.}
\end{cases},
\end{equation}
\begin{equation}
    r= \begin{cases}
    \sup_{s}-\inf_{s}, & \text{if}\ \exists\ inf_{s}\ \text{and}\ \exists\ \sup_{s}, \\
    s_1-\inf_{s}, & \text{if}\ \exists\ inf_{s}\ \text{and}\ \nexists\ \sup_{s}, \\
    \sup_{s}-s_2, & \text{if}\ \exists\ \sup_{s} \text{and}\ \nexists\ \inf_{s}, \\
    max(|s_1|,|s_2|) & \text{otherwise.}
\end{cases},
\end{equation}

where $r$ is the value range of the metric, $s_1$ and $s_2$ represent the scores for a more erroneous and a less erroneous labeling result, respectively. A large positive \textit{Q} value indicates the target metric is both consistent (positiveness) and sensitive (large value change) to changes in spatial labeling quality.  

\section{Supplementary experimental cases} \label{Appendix_results}

Here, we design two additional experimental cases to demonstrate SLAM's effectiveness in evaluating the quality of spatial labeling in terms of label agreement degree and consistency in spatial label distribution.
% In addition to the cases mentioned above, we design additional cases to further demonstrate the effectiveness of SLAM, which respectively reflect label agreement, consistency in spatial label distribution, and the application to more real-world datasets. 

\subsection{Label Agreement Degree}\label{Case:6}
% \paragraph{Increased number of mislabels (case IV).} A spatial labeling's correctness and similarity with ground truth depend on the number of mislabels. Thus, an effective evaluation metric should demonstrate deteriorating scores as the number of mislabels increases.

% 5.1 第一部分
\paragraph{SLAM correctly reflects labeling quality change due to increasing mislabels (\textit{case II}).}\label{case II} 
In this case, we simulate 360 spots, comprising 180 type A and 180 type B spots. We generate ten classification results, progressively increasing the number of A spots misclassified as type B, ranging from 9 to 95 (\Cref{fig:case2}a). Error rate ($1-accuracy$) is employed as an evaluation standard since it effectively measures labeling correctness. \Cref{fig:case2}b shows that SLAM increases with the number of misclassified spots, aligning with the trend observed in error rate. Thus, SLAM faithfully reflects the changes in spatial labeling similarity relative to the ground truth as the degree of label matching declines.

\begin{figure}[h]
    \centering
    \includegraphics[width=\textwidth]{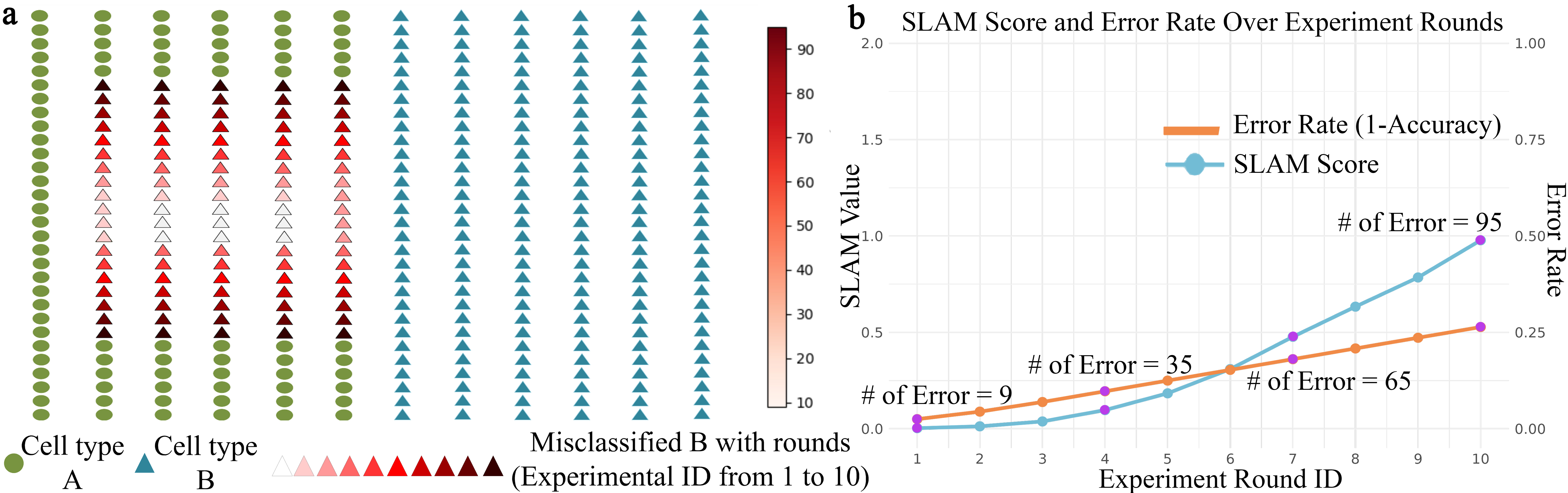}
    \caption{\textit{Case II}. SLAM reflects changes in similarity due to increasing mislabelings. a, 180 circle spots and 180 triangle spots represent distributions of type A and type B spots, respectively. The red spots represent type A spots misclassified as type B. The number of misclassified spots range from 9 to 95 in ten sequential experiments. A spot in deeper red is misclassified in later experiments.
    b, the trends of SLAM and error rate with increasing mislabelings.}
    \label{fig:case2}
\end{figure}

% Case IV

\subsection{Consistency in Spatial Label distribution}
% \textit{Aggregated versus dispersed mislabels.} Spatial labelings with aggregated and dispersed mislabels have distinct semantic meanings and qualities. Such a difference should be captured by the evaluation metric, even when the number of mislabels remains constant.

\paragraph{SLAM differentiates aggregated versus dispersed mislabelings (\textit{case IV}).}
Aggregated and dispersed mislabels have distinct semantic meanings, resulting in different similarity levels with the ground truth, even when their numbers are equal. In this case, we aim to evaluate the effectiveness of SLAM and benchmarks in detecting the quality difference between two labelings with aggregated and dispersed mislabels. We simulate a dataset with 100 normal spots as the ground truth labeling (\Cref{fig:case4}). In spatial labeling I, 40 spots are mislabeled as cancer spots and spread throughout the entire tissue section, while in spatial labeling II, 40 mislabeled spots aggregate together. \Cref{tab:Table 1} shows that all supervised and unsupervised benchmark metrics remain unchanged across the two spatial labelings, as indicated by their zero \textit{Q} values. Conversely, SLAM and the five internal metrics demonstrate nonzero \textit{Q} values, indicating their ability to differentiate the quality of the two spatial labelings based on the level of mislabel aggregation.

% \begin{table*}[t]
% \centering
% \caption{\textit{Q} coefficient of Cases \textit{VII}.  A positive \textit{Q} value (in \textcolor{OliveGreen}{green}) indicates the evaluation metric faithfully reflects the labeling quality change. Otherwise, the value is indicated in \textcolor{Red}{red}. N/A indicates that the metric is not applicable. 
% }
% \label{tab:Table 3}
% \resizebox{\textwidth
% }{!}{
% \begin{tabular}{c|c|cccccccccccccc}
% \hline
% \multirow{2}{*}{\textit{Case-ID}} & \multirow{2}{*}{\textbf{SLAM}} & \multicolumn{5}{c|}{\textit{Unsupervised External}} & \multicolumn{5}{c|}{\textit{Unsupervised Internal}} & \multicolumn{4}{c}{\textit{Supervised}} \\
% \cline{3-16}
% & & ARI & NMI & Jaccard Score & FMI & \multicolumn{1}{c|}{V-measure} & ASW & CHAOS & PAS & CH Index & \multicolumn{1}{c|}{DB Index} &Accuracy & Precision & Recall & F1 score\\
% \hline
% \cline{1-1}
% \textbf{\textit{VII}} & \textcolor{OliveGreen}{0.078} & \textcolor{Red}{0} & \textcolor{Red}{N/A} & \textcolor{Red}{0} & \textcolor{Red}{0} & \textcolor{Red}{N/A} & \textcolor{OliveGreen}{0.125} & \textcolor{OliveGreen}{0.040} & \textcolor{OliveGreen}{0.340} 
% & \textcolor{OliveGreen}{0.988} & \textcolor{OliveGreen}{0.900} & \textcolor{Red}{0} & \textcolor{Red}{N/A} & \textcolor{Red}{N/A} & \textcolor{Red}{N/A}\\
% \hline
% \end{tabular}
% }
% \end{table*}

\begin{figure}[h]
    \centering
    \includegraphics[width=0.5\linewidth]{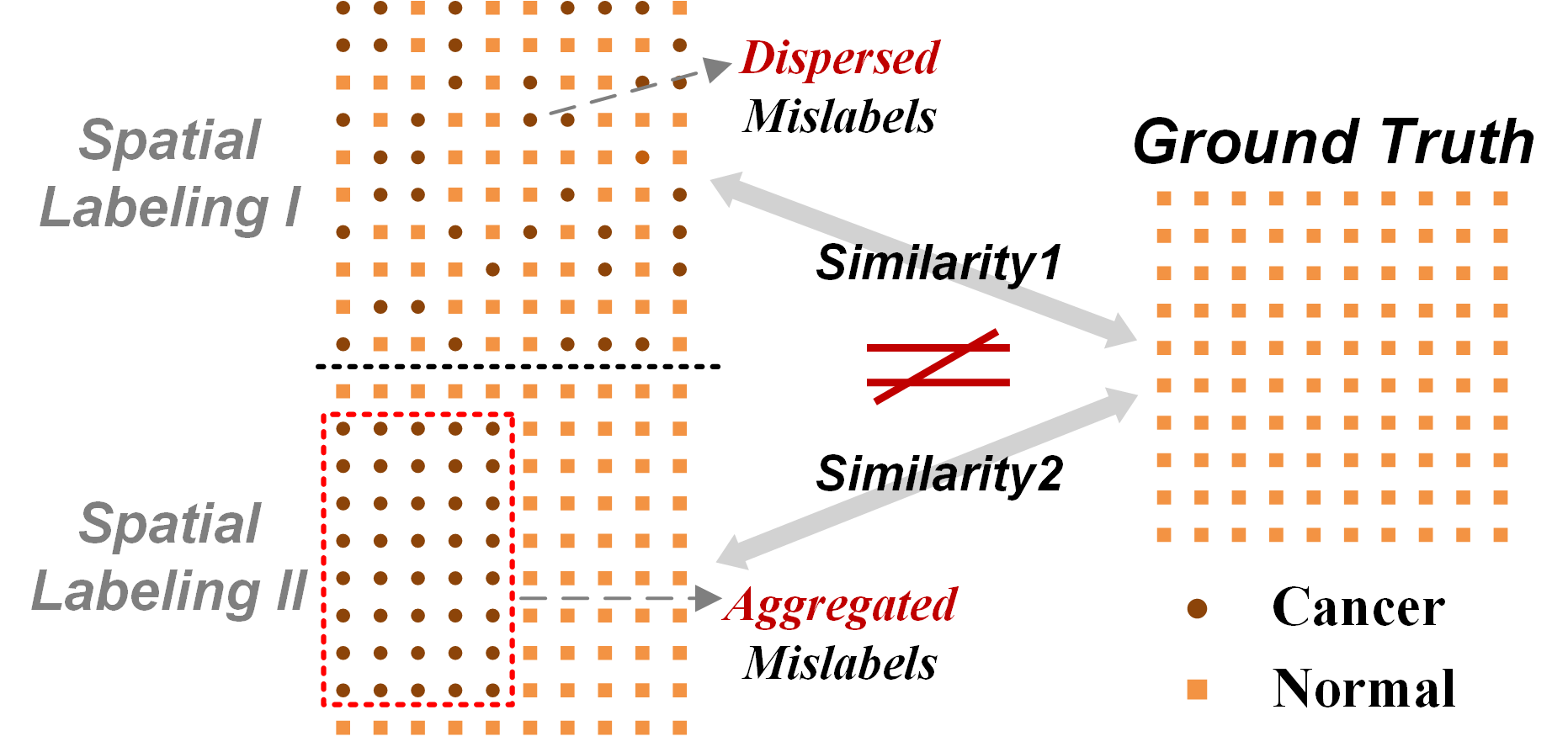}
    \caption{\textit{Case IV}. SLAM captures difference in similarity to the ground truth labeling between two spatial labelings with aggregated and dispersed mislabels.}
    \label{fig:case4}
\end{figure}

\begin{figure}[H]
    \centering
    \begin{subfigure}[t]{0.3\linewidth}
        \centering
        \includegraphics[width=\linewidth]{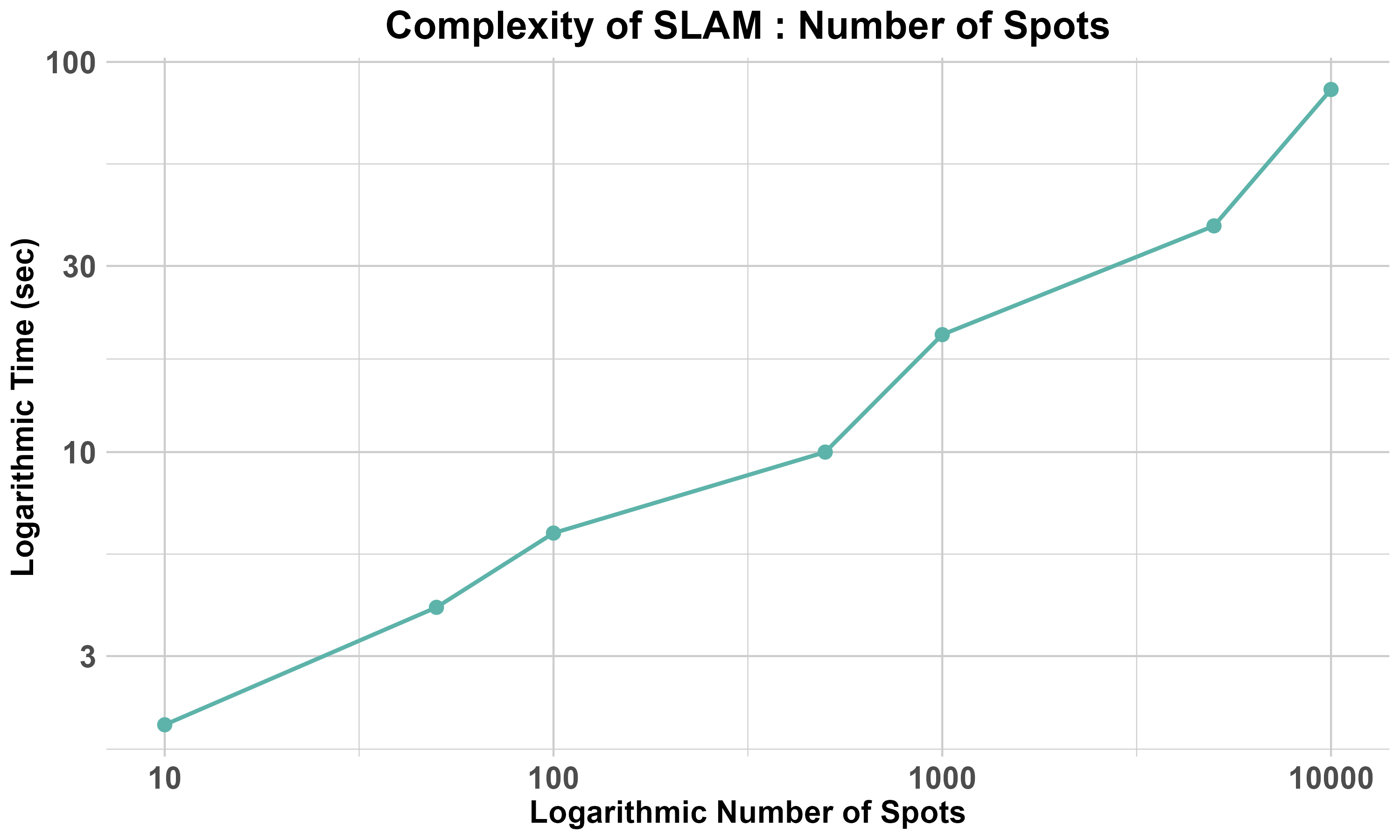}
        \caption{Complexity Analysis of SLAM with Respect to \textit{Number of Spots}.}
        \label{fig:complexity_spot}
    \end{subfigure}
    \begin{subfigure}[t]{0.3\linewidth}
        \centering
        \includegraphics[width=\linewidth]{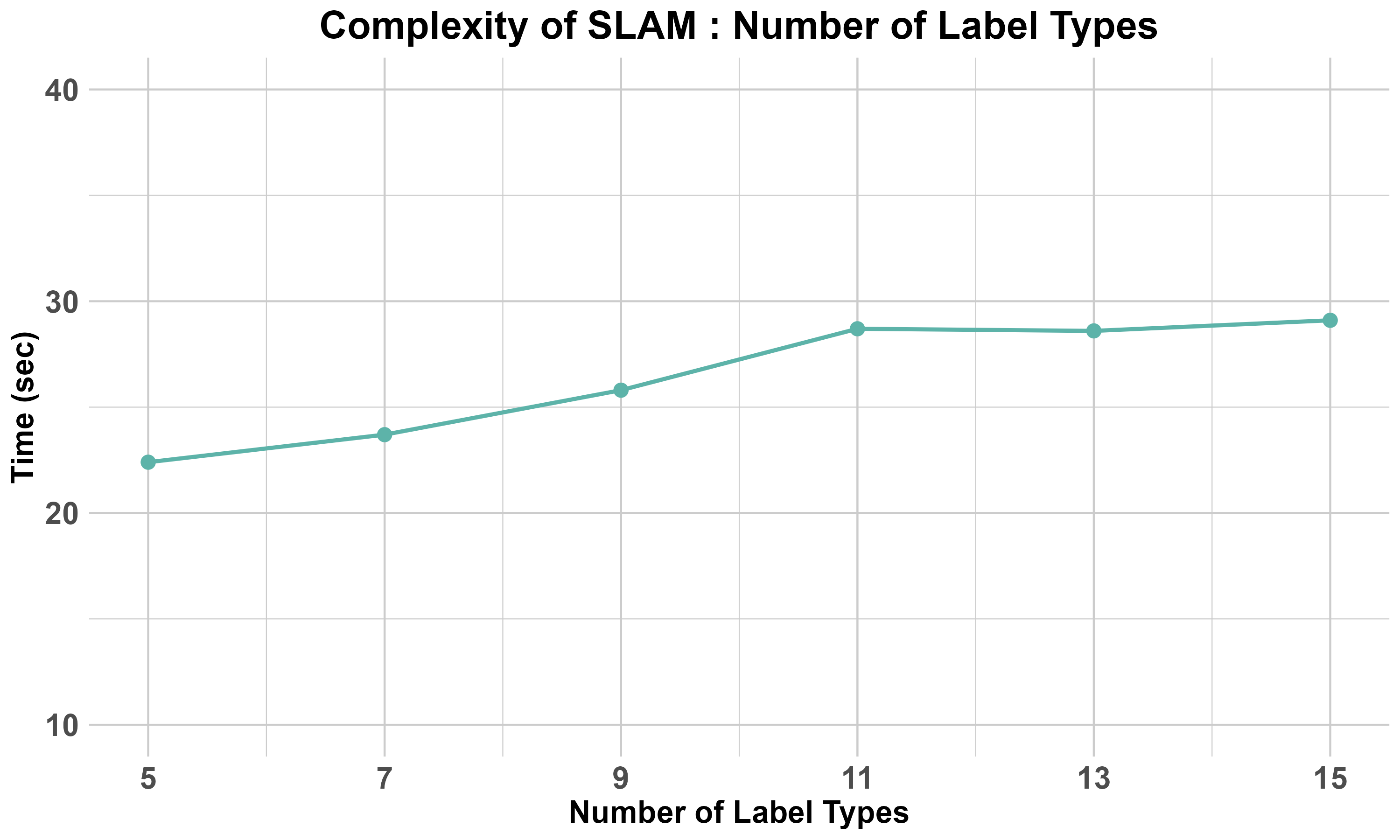}
        \caption{Complexity Analysis of SLAM with Respect to \textit{Number of Labels}.}
        \label{fig:complexity_label}
    \end{subfigure}
    \begin{subfigure}[t]{0.3\linewidth}
        \centering
        \includegraphics[width=\linewidth]{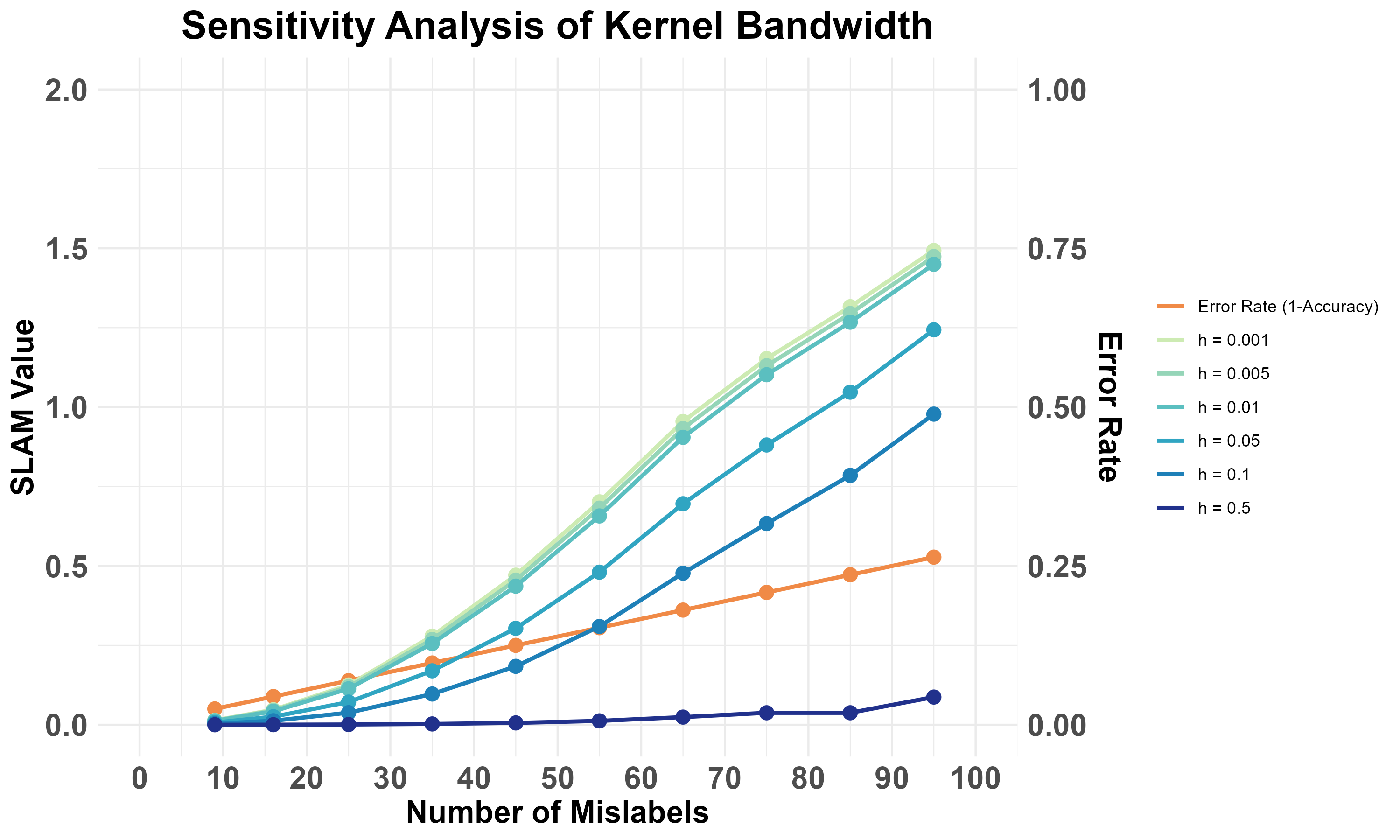}
        \caption{Sensitivity Analysis of \textit{Kernel Bandwidth}.}
        \label{fig:sensitivity}
    \end{subfigure}
\end{figure}

\section{Complexity Analysis}\label{Complexity}

To evaluate SLAM’s scalability to data size, we conduct a complexity analysis where the spatial labeling encompasses an increasing number of spots, ranging from 10 to 10,000. \Cref{fig:complexity_spot} shows that SLAM is scalable to data size, with a computation time of less than 100 seconds for evaluating 10,000 spots. Moreover, we evaluate SLAM's scalability to the number of label types. Using 1,000 spots, we start with five label types and incrementally increase the number of types by two until reaching 15, a relatively large value in ST studies. For each iteration, a spatial labeling result is generated by randomly mislabeling 50\% of the spots. As shown in \Cref{fig:complexity_label}, SLAM exhibits good scalability to the number of label types, with a computation time of less than 30 seconds for 15 types. Put together, these results highlight SLAM’s efficiency and practicality in spatial labeling evaluation.

\section{Sensitivity Analysis}\label{sensitivity analysis}

We conducted a sensitivity analysis on the bandwidth parameter $h$ in the Kernel Density Estimation (\Cref{KDE}). The analysis is carried out with our experimental case VI (\Cref{Case:6}), with $h$ values ranging from 0.001 to 0.5. Figure \ref{fig:sensitivity} shows the trend of SLAM values and the error rate with an increasing number of mislabels under different bandwidth values.  We find that the SLAM's performance is generally robust to the choice of bandwidth value (except when $h=0.001$), exhibiting a consistent trend with the error rate. Empirically, we suggest using 0.1 as the default value for $h$.

\section{Benchmarks Introduction}\label{Metrics Introduction}
We here provide a brief explanation of all the baseline metrics mentioned earlier, organized by category.

\begin{enumerate}
    \item \textbf{Supervised Metrics}\\
    % This type of metrics are used to evaluate the performance of classification algorithms, including accuracy, precision, recall, and F1 score.
    \begin{itemize}
    \item Accuracy\cite{makridakis1993accuracy}:
    Measures the proportion of samples that are correctly classified out of the total number of samples. It is given by:
    $$
    \text{Accuracy} = \frac{\text{TP} + \text{TN}}{\text{Total Number of Samples}}
    $$
    \item Precision\cite{zhu2004recall}:
    Calculates the precision for each class and then takes the average, reflecting the classifier's ability to predict positive instances correctly. Precision is defined as:
    $$
    \text{Precision} = \frac{\text{TP}}{\text{TP} + \text{FP}}
    $$
    \item Recall\cite{zhu2004recall}:
    Calculates the recall for each class and then takes the average, reflecting the classifier's ability to identify all actual positive instances. Recall is defined as:
    $$
    \text{Recall} = \frac{\text{TP}}{\text{TP} + \text{FN}}
    $$
    \item F1 Score\cite{goutte2005probabilistic}:
    Calculates the F1 score for each class and then takes the average, which is the harmonic mean of precision and recall, reflecting the balanced performance of the classifier. The F1 score is given by:
    $$
    \text{F1 Score} = 2 \times \frac{\text{Precision} \times \text{Recall}}{\text{Precision} + \text{Recall}}
    $$
    \end{itemize}

    \vspace{1em}
    
    \item \textbf{Unsupervised Metrics}\\
    \begin{itemize}
        \item ARI (Adjusted Rand Index):
        ARI \cite{ARI} is a metric used to measure the similarity between two clustering results, adjusted for the chance grouping of elements. It compares the number of pairs of elements consistently assigned to the same or different clusters in both partitions, while correcting for expected agreements under random labeling. The ARI ranges from -1 to 1, where 1 indicates perfect agreement, 0 indicates random clustering, and negative values suggest worse-than-random clustering. Let $n$ represents the total number of spots, $n_{ij}$ the number of spots of type $i$ within cluster j, $a_i$ the total number of spots of type i, $b_j$ the total number of spots within cluster $j$. Then ARI is calculated as follows:
        \begin{equation}
        \label{E1}
        \mathrm{ARI} 
        = \frac{
          \displaystyle 
          \sum_{i,j} \binom{n_{ij}}{2} 
          \;
          \Bigl(\sum_{i} \binom{a_{i}}{2} \,\sum_{j} \binom{b_{j}}{2}\Bigr)\Big/\binom{n}{2}
        }{
          \displaystyle 
          \tfrac{1}{2}\Bigl[\sum_{i} \binom{a_{i}}{2} + \sum_{j} \binom{b_{j}}{2}\Bigr]
          \;-\;
          \Bigl(\sum_{i} \binom{a_{i}}{2} \,\sum_{j} \binom{b_{j}}{2}\Bigr)\Big/\binom{n}{2}
        }.
        \end{equation}

    % The Adjusted Rand Index (ARI) \cite{ARI} is a metric that quantifies the degree of alignment between two clustering results. It refines the Rand Index, which inspects pairs of data points to determine if they fall into the same group or different groups across both clusterings. The ARI is obtained using \cref{E1} and \cref{E2}. Specifically, $TP$ denotes the count of point pairs that are grouped together in both the ground truth and the predicted clusters, whereas $TN$ signifies the number of point pairs that appear in separate clusters in both the ground truth and the predicted clusters. Meanwhile, $FP$ represents the count of pairs that are placed together in the ground truth but split up in the predicted clusters, and $FN$ refers to the number of pairs that are in different groups in the ground truth yet end up in the same group in the predicted clusters. $E(RI)$ is the expected value of the Rand Index under the premise that the true and predicted clusters are independent, and $max(RI)$ indicates the highest attainable Rand Index. The ARI can vary between $-1$ and $1$, where a score of $1$ shows perfect consistency between the clusters, $0$ corresponds to random assignment, and negative values indicate a clustering outcome worse than random.
    % \begin{equation}
    % \label{E1}
    % \text{ARI} = \frac{\text{RI} - \text{E(RI)}}{\text{max(RI)} - \text{E}[\text{RI}]}
    % \end{equation}

    % \begin{equation}
    % \label{E2}
    % RI = \frac{TP+TN}{\text{TP+TN+FP+FN}}
    % \end{equation}

    \item NMI (Normalized Mutual Information):
    NMI \cite{NMI} is a metric used to evaluate the similarity between two clustering results by measuring the amount of shared information. It is based on mutual information and adjusts for differences in cluster sizes, making it robust for comparing partitions with varying numbers of clusters. NMI is normalized to range between 0 and 1, where 1 indicates perfect agreement between clusters and 0 indicates no shared information. The computation of NMI is defined as:
    \begin{equation}
    \label{E3}
    \text{NMI} = \frac{\text{MI(U,V)}}{\sqrt{H(U) \times H(V)}},
    \end{equation}
    where $H(U)$ and $H(V)$ denote the entropy of clusters $U$ and $V$, respectively, and $\mathrm{MI}(U, V)$ represents the mutual information between them. The mutual information is calculated as:
    \begin{equation}
    \label{E4}
    \mathrm{MI}(U, V) = \sum_{i=1}^{|U|} \sum_{j=1}^{|V|} 
   \frac{|U_i \cap V_j|}{N} 
   \log\biggl(\frac{N \, |U_i \cap V_j|}{|U_i| \, |V_j|}\biggr),
    \end{equation}
    where $N$ is the total number of elements, $|U_i|$ and $|V_j|$ represent the sizes of cluster $U_i$ and $V_j$, and $|U_i\cap V_j|$ is the number of elements shared between clusters $U_i$ and $V_j$.

    \item Jaccard Score:
    The Jaccard score \cite{JaccardScore} measures the similarity between true labels ($C$) and clustering results ($K$) by calculating the ratio of the size of their intersection $|C\cap K|$ to the size of their union $ |C\cup K|$ for each category. The final score is obtained by averaging this ratio across all categories. The Jaccard score ranges from 0 to 1, where 1 indicates perfect overlap between the true labels and clustering results, and 0 indicates no overlap. The Jaccard score is computed as:
    \begin{equation}
    \text{Jaccard Score} = \frac{1}{k} \sum_{l=1}^{k} \frac{|C_l \cap K_l|}{|C_l \cup K_l|},
    \end{equation}
    where $k$ denotes the total number of domain types, $C_l$ and $K_l $ represents the true and clustering labels of domain $l$, respectively. 

    \item FMI (Fowlkes-Mallows Index):
    The FMI \cite{FMI} evaluates the similarity between two clustering results by assessing the overlap of their pairwise assignments. It calculates the geometric mean of precision and recall based on true positive, false positive, and false negative pairs. The FMI ranges from 0 to 1, where 1 indicates perfect agreement between the two clusterings, and 0 represents no agreement. The FMI is computed as:
    \begin{equation}
    \mathrm{FMI} = \frac{\mathrm{TP}}{\sqrt{(\mathrm{TP} + \mathrm{FP}) \, (\mathrm{TP} + \mathrm{FN})}},
    \end{equation}
    where TP (True Positive) is the number of pairs of points that belong to the same cluster in both truth labels and predicted labels, FP (False Positive) is the number of pairs of points that belong to the same cluster in predicted labels but not in truth labels, and FN (False Negative) is the number of pairs of points that belong to the same cluster in truth labels but not in predicted labels. 

    \item V-measure:
    The V-measure\cite{v-measure}  is an external clustering evaluation metric that assesses the agreement between predicted clustering and ground truth by balancing two aspects: homogeneity and completeness. Homogeneity ensures that each cluster contains only data points from a single true class, while completeness ensures all data points of a true class are assigned to the same cluster. The V-measure is the harmonic mean of these two components, providing a score between 0 and 1, where 1 indicates perfect clustering. It is calculated as:
    \begin{equation}
    \text{V-measure} = \frac{2 \times h \times c}{h+ c},
    \end{equation}
    where $h = 1 - \frac{H\bigl(C \mid K\bigr)}{H(C)}$ is the homogeneity, $c = 1 - \frac{H\bigl(K \mid C\bigr)}{H(K)}$ is the completeness, $H\bigl(C \mid K\bigr)$ and $H\bigl(K \mid C\bigr)$ are the conditional entropies of the true labels $C$ and the predicted clusters $K$.
    \end{itemize}

    \vspace{1em}
    
    \item \textbf{Internal Metrics}\\
    \begin{itemize}
        \item Adjusted Silhouette Width (ASW):
        The ASW \cite{rousseeuw1987silhouettes} is a metric used to evaluate the quality of clustering by measuring how well a spot is assigned to its cluster compared to other clusters, while adjusting for chance. It is based on the Silhouette score, which considers the cohesion (distance within the same cluster) and separation (distance to the nearest other cluster). The adjustment in ASW accounts for differences in cluster sizes and random assignments. ASW values range from -1 to 1, where values close to 1 indicate well-separated and cohesive clusters, and values near 0 suggest random assignments. We first calculate Silhouette Width for spot $i$ as follows:
        \begin{equation}
        \label{E5}
        \mathrm{SW}_i = \frac{b_i - a_i}{\max(a_i, b_i)},
        \end{equation}
        where $a_i$ is the average distance to all other spots in the same domain, and $b_i$ is the average distance to all spots in the nearest other domain.
    
        \item Clustering Hierarchy of Anomalies and Outliers Score (CHAOS):
        The CHAOS metric can be applied to assess the spatial clustering performance by measuring the spatial continuity of identified domains \cite{CHAOS1,CHAOS2}. We first build a 1-nearest neighbor (NN) graph, wherein each spot is connected to its closest spot. Then we have:
        \begin{equation}
        \label{E7}
        w_{kij} = 
        \begin{cases}
        d_{ij}, & \text{if spot \(i\) and spot \(j\) are connected in the 1NN graph in cluster \(k\)},\\
        0, & \text{otherwise}.
        \end{cases}
        \end{equation}
        where $d_{ij}$ is the Euclidean distance between $\text{spots \(i\) and \(j\)}$. The CHAOS is then computed as the average value of $w$:
        \begin{equation}
        \label{E6}
        \mathrm{CHAOS} = \frac{\sum_{k=1}^{K} \sum_{i,j}^{n_k} w_{kij}}{N},
        \end{equation}
        where $n_k$ is the number of cells in the $k$-th domain, $N$ is the total number of spots, and $K$ is the number of domains. CHAOS values range from $0$ to $N/A$, with lower scores signifying greater spatial continuity and thus superior overall performance.
    
        \item Percentage of Abnormal Spots (PAS):
        The PAS metric \cite{PAS} evaluates the spatial uniformity of cluster labels within a ST field. It is defined as the fraction of spots whose cluster labels differ from at least six of their ten nearest neighbors. The PAS scores range from $0$ to $1$, with a smaller value indicating greater spatial homogeneity in cluster labels. 

        \item Calinski-Harabasz (CH) index:
        The CH index is a metric for evaluating clustering quality by measuring the ratio of between-cluster dispersion to within-cluster dispersion. It assesses how well-separated clusters are while ensuring that spots within each cluster are compact. CH Index is always positive, with a larger value indicating better-defined and more distinct clusters. Given a dataset of $N$ spots and $K$ clusters, CH index is calculated as:

        \begin{equation}
        \label{eq:CH_index}
        \mathrm{CH} 
        = \frac{
       \displaystyle 
       \frac{1}{K - 1}
       \sum_{k=1}^{K} n_k \,\bigl\lVert c_k - c \bigr\rVert^2
        }{
       \displaystyle 
       \frac{1}{N - K}
       \sum_{k=1}^{K}\sum_{i=1}^{n_k} \bigl\lVert d_i - c_k \bigr\rVert^2
        },
        \end{equation}
        where $n_k$ denotes the numbers of spots in the  $k^{th}$ cluster, $c_k$ the centroid of the $k^{th}$ cluster, and $c$ the global centroid.

    \item Davies-Bouldin (DB) index:
        The DB Index \cite{DB-index} evaluates clustering quality by measuring the average similarity between each cluster and the cluster most similar to it. It balances intra-cluster compactness and inter-cluster separation. The DB Index is always positive, with a lower value indicating better clustering, as it reflects well-separated and compact clusters. Given $K$ clusters, the DB index is calculated as:
        \begin{equation}
        \label{eq:DB_index}
        \mathrm{DB} = \frac{1}{K} \Sigma_{i=1}^K \underset{{j\neq i}}{max}( \frac{s_i + s_j}{d_{i,j}}),
        \end{equation}
    where $s_i$ denotes the average intra-cluster distance for cluster $i$, $d_{i,j}$ denotes the distance between the centroids of clusters $i$ and $j$.
    \end{itemize}
\end{enumerate}

\section{The Range and Direction of Benchmarks}\label{The range and direction of benchmarks}
\begin{table}[h]
    \centering
    \caption{The range and direction of benchmarks. $\uparrow$ means the metric score increases with improved performance, otherwise $\downarrow$.}
    \label{tab:Table 4}
    \begin{tabular}{cccc}
        \toprule
        Metric & Type & Range & Direction \\
        \midrule
        \textbf{\textit{SLAM}} & External & [0, 2] & $\downarrow$ \\
        \hline
        Accuracy & \multirow{5}{*}{Supervised external} & [0, 1] & $\uparrow$ \\
        Precision & & [0, 1] & $\uparrow$ \\
        Recall & & [0, 1] & $\uparrow$ \\
        F1 score & & [0, 1] & $\uparrow$ \\
        \hline
        NMI & \multirow{5}{*}{Unsupervised external} & [0, 1] & $\uparrow$ \\
        ARI & & [$-1$, 1] & $\uparrow$ \\
        Jaccard Score & & [0, 1] & $\uparrow$ \\
        V-measure & & [0, 1] & $\uparrow$ \\
        FMI & & [0, 1] & $\uparrow$ \\
        \hline
        CHAOS & \multirow{5}{*}{Internal} & (0, $\infty$) & $\downarrow$ \\
        PAS & & [0, 1] & $\downarrow$ \\
        ASW & & [$-1$, 1] & $\uparrow$ \\
        CH-Index & & (0, $\infty$) & $\uparrow$ \\
        DB-Index & & [0, $\infty$) & $\downarrow$ \\
        \bottomrule
    \end{tabular}
\end{table}

\newpage

\section{Framework Implementation} \label{Appendix_fig}
We provide a visual representation to better illustrate the framework implementation. 

\begin{figure*}[h]
    \centering
    \includegraphics[width=\textwidth]{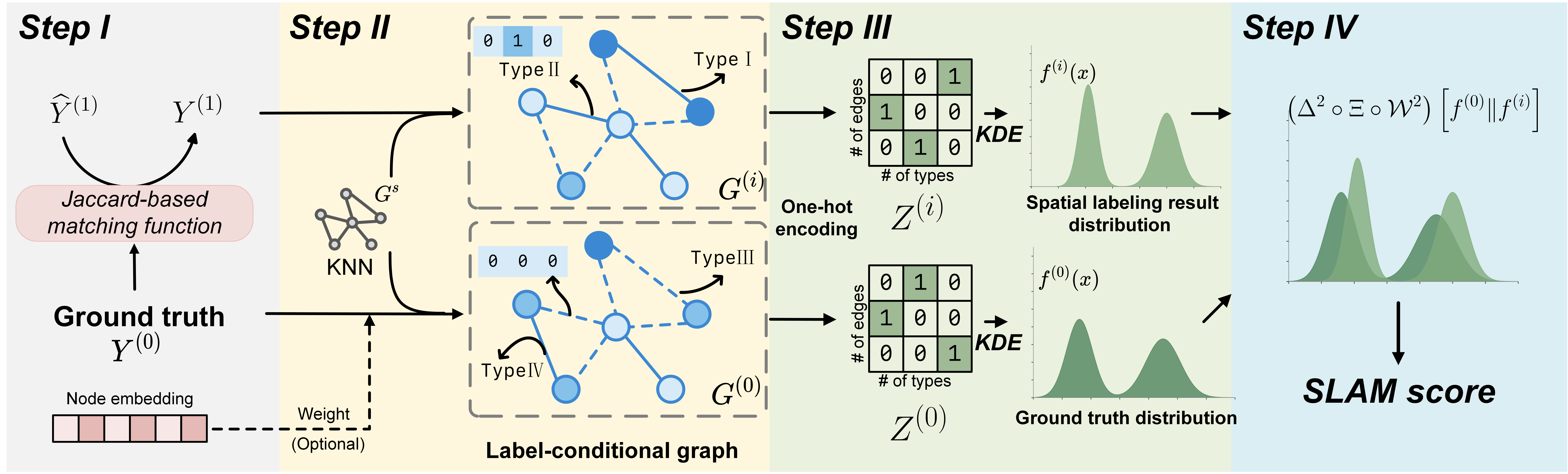}
  \caption{Framework Implementation. Initially, a Jaccard coefficient-based function is employed to match the label spaces of the two labelings. In the second step, SLAM edits the graph attributes based on spot types, locations, and attributes, incorporating information about label agreement, spatial label distribution, and mislabeling severity. In the third step, a gaussian kernel density estimation function is used to extract the distribution of the edited graph attributes. Finally, SLAM calculates a discrepancy score as an indicator of labeling dissimilarity using a composite function that integrates Wasserstein distance, exponential kernel, and MMD functions.}
    \label{fig:fig_implementation}
  \end{figure*}

\newpage

\section{The Algorithm of SLAM}\label{pseudocode}
\begin{algorithm}[H]
\caption{Implementation for evaluating spatial labeling using spatial transcriptomics (ST) data}
\begin{algorithmic}
\Require 
\Statex Data matrix $X \in \mathbb{R}^{n \times g}$ 
\Statex Ground truth labels $Y^{(0)} \in \{1,2, \cdots, K\}^n$  
\Statex Spatial labeling results $\hat{Y}^{(i)} \in \{a_1,a_2 \cdots, a_{K_i}\}^n$
\Statex Jaccard coefficient-based label matching function $\mathcal{M}$
\Statex Basic spatial information graph $G^s(V, E^s, W^s)$  
\Statex Graph attribute editing function $\mathcal{G}$
\Statex One-hot encoding extracting function $\mathcal{T}$
\Statex Distributional discrepancy function $\mathcal{D}$
\Ensure Discrepancy score $d$.
\end{algorithmic}

\begin{algorithmic}[1]
\State \textbf{Step 1: Label matching}
% \State ˆY(i)←Fi(X,Gs)\hat{Y}^{(i)} \gets F_i(X, G^s) \Comment{Do spatial clustering}
% \Comment{Apply clustering algorithm to get clustering result}
\State $Y^{(i)} \gets \mathcal{M}(\hat{Y}^{(i)}, Y^{(0)})$ \Comment{Match using Jaccard coefficient-based label matching function}

\State \textbf{Step 2: Constructing basic spatial graph}
\State Given the ST data basic spatial information graph $G^s(V, E^s, W^s)$:
\State For fixed number of neighbors $s$, edge set $E^s\in \{1,0\}^{|E^S|}$ is chosen in the s-nearest neighbor node set $v$.

\State For fixed radius $r$, edge weight set $W^s\in \mathbb{R}^{|E^S|}$ is 1, $\text{if } dist(u,v)\le r$ 

\State \textbf{Step 3: Constructing label-conditional attributed graph}
$G^{(i)}(V, E^{(i)}, W^{(i)}) \gets \mathcal{G}(Y^{(i)},G^s, X),\ \forall i\in \{0, 1, \cdots\}$ \Comment{Construct label-conditional attributed graph using graph attribute editing function}
\State The edge attributes $E^{(i)}$ are:
$$
e_{I(u,v)}^{(i)} =
\begin{cases}
t, & \text{if } y_{u}^{(i)} = y_{v}^{(i)} = t, e_{I(u,v)}^s = 1, \\
0, & \text{otherwise.}
\end{cases}
$$
where $t \in \{1, \cdots, K\}$, $e_{I(u,v)}^{(i)}\in E^{(i)}, e_{I(u,v)}^{s}\in E^{s}, \forall u,v\in V$. 
\State Gene similarity weight can be added accounting for severity variations across mislabel types.

\State \textbf{Step 4: Extracting distributions of graph attributes}
\State An One-hot encoding extracting function $\mathcal{T}$ is used to obtain the distribution of edited graph. Then SLAM use a Gaussian kernel density estimator to get a series of underlying distributions $f^{(i)}(x)$.

\State \textbf{Step 5: Computing the discrepancy between graph attribute distributions}
\State Distributional discrepancy function $\mathcal{D}:=(\Delta^2\circ \Xi\circ \mathcal{W}^2)$ (a composite function integrating the sliced Wasserstein distance function, a symmetric positive definite exponential kernel function $\Xi$, and a maximum-mean discrepancy (MMD) function $\Delta$)
\State Use $\mathcal{D}$ to compute discrepancy:
$$
\begin{aligned}
     & d = (\Delta^2\circ \Xi\circ \mathcal{W}^2)[ f^{(0)}|| f^{(i)}],
\end{aligned}
$$
\State \Return $d$
\end{algorithmic}
\end{algorithm}

\end{document}